\documentclass{article}

\PassOptionsToPackage{numbers,compress}{natbib}

\usepackage[preprint]{neurips_2026}

\usepackage[utf8]{inputenc} 
\usepackage[T1]{fontenc}    
\usepackage{hyperref}       
\usepackage{url}            
\usepackage{booktabs}       
\usepackage{multirow}     
\usepackage{tabularx}       
\usepackage{graphicx}       
\usepackage{wrapfig}        
\usepackage{subcaption}     
\usepackage{amsmath}        
\usepackage{amsfonts}       
\usepackage{nicefrac}       
\usepackage{microtype}      
\usepackage{xcolor}         
\usepackage{colortbl}       
\usepackage{caption}
\definecolor{lightblue}{HTML}{E1F4FC}
\usepackage{amsthm}         
\usepackage{enumitem}       

\usepackage{titletoc}

\theoremstyle{definition}
\newtheorem{proposition}{Proposition}
\theoremstyle{plain}
\newtheorem{remark}{Remark}

\title{Diffusion Masked Pretraining for\\Dynamic Point Cloud}

\author{%
  Zhuoyue Zhang$^{1}$ \quad
  Jihua Zhu$^{1}$\thanks{Corresponding author.} \quad
  Chaowei Fang$^{1}$ \quad
  Jian Liu$^{2}$ \quad
  Ajmal Saeed Mian$^{3}$ \\
  $^{1}$Xi'an Jiaotong University, Xi'an, China \\
  $^{2}$School of Artificial Intelligence and Robotics, Hunan University, China\\
  $^{3}$University of Western Australia, Perth, Australia \\
  \texttt{zhangzy4016@stu.xjtu.edu.cn} \quad
  \texttt{zhujh@xjtu.edu.cn} \\
}

\begin{document}

\maketitle

\begin{abstract}
Dynamic point cloud pretraining is still dominated by masked reconstruction objectives. However, these objectives inherit two key limitations. Existing methods inject ground-truth tube centers as decoder positional embeddings, causing spatio-temporal positional leakage. Moreover, they supervise inter-frame motion with deterministic proxy targets that systematically discard distributional structure by collapsing multimodal trajectory uncertainty into conditional means. To address these limitations, we propose \textbf{Di}ffusion \textbf{M}asked \textbf{P}retraining (\textbf{DiMP}), a unified self-supervised framework for dynamic point clouds. DiMP introduces diffusion modeling into both positional inference and motion learning. It first applies forward diffusion noise only to masked tube centers, then predicts clean centers from visible spatio-temporal context. This removes positional leakage while preserving visible coordinates as clean temporal anchors. DiMP also reformulates point-wise inter-frame displacement supervision as a DDPM noise-prediction objective conditioned on decoded representations. This design drives the encoder to target the full conditional distribution of plausible motions under a variational surrogate, rather than collapsing to a single deterministic estimate. Extensive experiments demonstrate that DiMP consistently improves downstream accuracy over the backbone alone, with absolute gains of 11.21\% on offline action segmentation and 13.65\% under causally constrained online inference.
\end{abstract}

\section{Introduction}

Self-supervised pretraining with masked autoencoders (MAE)~\cite{he2022masked,pang2022masked,yu2022point,sun2025hyperpoint,li2025pointdico} has proven effective for static 3D point cloud representation learning by masking local geometric regions and supervising coordinate reconstruction. Building on this, diffusion-based pretraining methods~\cite{zheng2024pointdif,li2024diffpmae,xiaopoint} integrate DDPMs~\cite{ho2020denoising} with masked autoencoding, and Point-MaDi~\cite{xiaopoint} further identifies that injecting ground-truth patch centers as decoder positional embeddings leaks spatial priors~\cite{sameni2023representation,wang2023droppos,caron2024location}, addressing this via diffusion noise over patch centers.

\begin{figure}
  \centering
  \includegraphics[width=\linewidth]{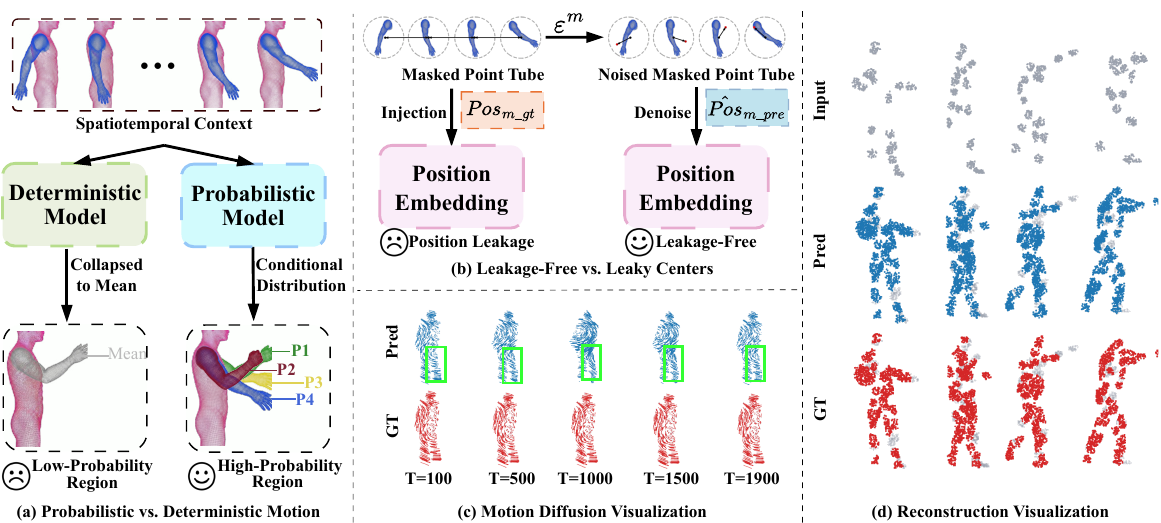}
  \caption{Motivation of DiMP. 
(a) Deterministic models collapse multimodal motion distributions to a single conditional mean, which lies in a low-probability region when trajectories are multimodal. DiMP targets the full conditional distribution under a variational surrogate.
(b) Directly injecting ground-truth masked tube center coordinates as decoder 
positional embeddings causes positional leakage, whereas DiMP applies forward 
diffusion noise to masked centers and replaces ground-truth positions with 
denoised predictions, yielding a leakage-free positional embedding. 
(c) Qualitative comparisons of denoised inter-frame displacement predictions
against the ground truth show that DiMP recovers motion patterns faithfully
across representative diffusion timesteps. Detailed analysis of why the model
performs well at small and large $t$ is provided in the Appendix~\ref{app:timestep-analysis}.
(d) Qualitative reconstruction results show that DiMP recovers the overall
human pose and preserves local geometric structures with high fidelity,
yielding reconstructions that closely match the ground truth.
}
\label{fig:motivation}
\end{figure}

Despite this progress, existing methods remain confined to \emph{static} point clouds.
Research has shifted toward dynamic point clouds~\cite{sun2026align}.
Such dynamic sequences couple spatial layout within each frame with temporal deformation across frames instead of treating scans as unrelated static clouds.
Dynamic point cloud understanding is critical to autonomous driving, embodied intelligence, and action recognition. Masked reconstruction methods such as MaST-Pre~\cite{shen2023masked} and M2PSC~\cite{han2024masked} extend the spatio-temporal tube masking paradigm to dynamic sequences. However, they still rely on deterministic proxy targets to supervise inter-frame motion. Accordingly, motion supervision is reduced to fitting conditional expectations. This formulation cannot target the full conditional distribution of motion trajectories. The same spatio-temporal context can correspond to multiple plausible continuations, but mean regression collapses these modes into a single deterministic solution, systematically discarding the distributional structure that is critical for discriminating action classes~\cite{gupta2018socialgan,salzmann2020trajectronplusplus}. Additionally, these methods inherit positional leakage from the static domain by injecting ground-truth spatio-temporal tube centers as decoder positional embeddings.

In this paper, we propose \textbf{Di}ffusion \textbf{M}asked \textbf{P}retraining (\textbf{DiMP}), the first framework to harness the full distributional modeling capacity of DDPMs for dynamic point cloud pretraining. DiMP addresses two intertwined challenges. \textbf{(1) Spatio-temporal positional leakage:} Unlike the static setting where noise can be applied uniformly to all patch centers, visible tube coordinates in videos serve a dual role as geometric anchors for the encoder and as temporal conditioning signals that link consecutive frames into a coherent motion sequence. Corrupting them would undermine motion continuity (Table~\ref{tab:ablation-noise}), necessitating perturbation exclusively on masked tube centers. \textbf{(2) Distributional motion modeling:} By reformulating inter-frame displacement supervision as a DDPM noise prediction task conditioned on decoded representations, DiMP drives the encoder to internalize the full multimodal distribution of motion trajectories rather than collapsing to deterministic mean estimates. Because different action classes can share nearly identical mean trajectories yet differ sharply in their trajectory variance and shape, capturing this distributional structure is decisive for fine-grained action discrimination.
Crucially, these two challenges are not independent, positional leakage is a \emph{prerequisite barrier} to distributional motion modeling. Leaked ground-truth tube centers collapse decoder cross-attention into local geometric retrieval, severing the informational pathway through which the motion diffusion branch drives the encoder to internalize motion distributions. Resolving positional leakage is therefore necessary before diffusion-based motion uncertainty can be genuinely exploited (see Appendix~\ref{app:leakage-prerequisite}).

Concretely, DiMP is built on a masked autoencoding backbone and operates in two stages. In the \emph{spatio-temporal center modeling} stage, we apply forward diffusion noise only to masked tube centers. The diffusion model is conditioned on visible features and temporal positional encodings to predict the masked center distribution. This design removes positional leakage while preserving clean temporal context. Based on the predicted centers, a masked decoder further fuses visible encoder features with predicted center embeddings to form complete spatio-temporal representations. These representations are supervised by per-frame Chamfer distance to reconstruct the full dynamic point cloud. In the \emph{motion-aware modeling} stage, we reformulate point-wise inter-frame displacement supervision as a standard DDPM noise prediction task conditioned on the decoded representation. This enables DiMP to genuinely exploit the distributional capacity of DDPMs in dynamic 3D pretraining.

Our main contributions are as follows:
\begin{itemize}
  \item We provide a formal argument that deterministic motion supervision 
  systematically discards distributional structure critical to multimodal 
  trajectory uncertainty, and further 
  identify positional leakage as a prerequisite barrier to distributional motion 
  modeling.

  \item We propose \textbf{DiMP}, the first diffusion-based pretraining framework 
  for dynamic point clouds, which reformulates inter-frame displacement supervision 
  as a DDPM noise-prediction task to target the full conditional distribution of 
  motion trajectories under a variational surrogate.

  \item We introduce a center diffusion strategy that confines noise injection 
  exclusively to masked tube centers, resolving positional leakage without 
  compromising temporal inference.

  \item Extensive experiments across action segmentation, semantic segmentation, 
  and action recognition demonstrate that DiMP consistently outperforms prior 
  methods on multiple benchmarks, with particularly pronounced gains under 
  causally constrained online inference.
\end{itemize}

\section{Related work}

\begin{figure}[t]
  \centering
  \includegraphics[width=\linewidth]{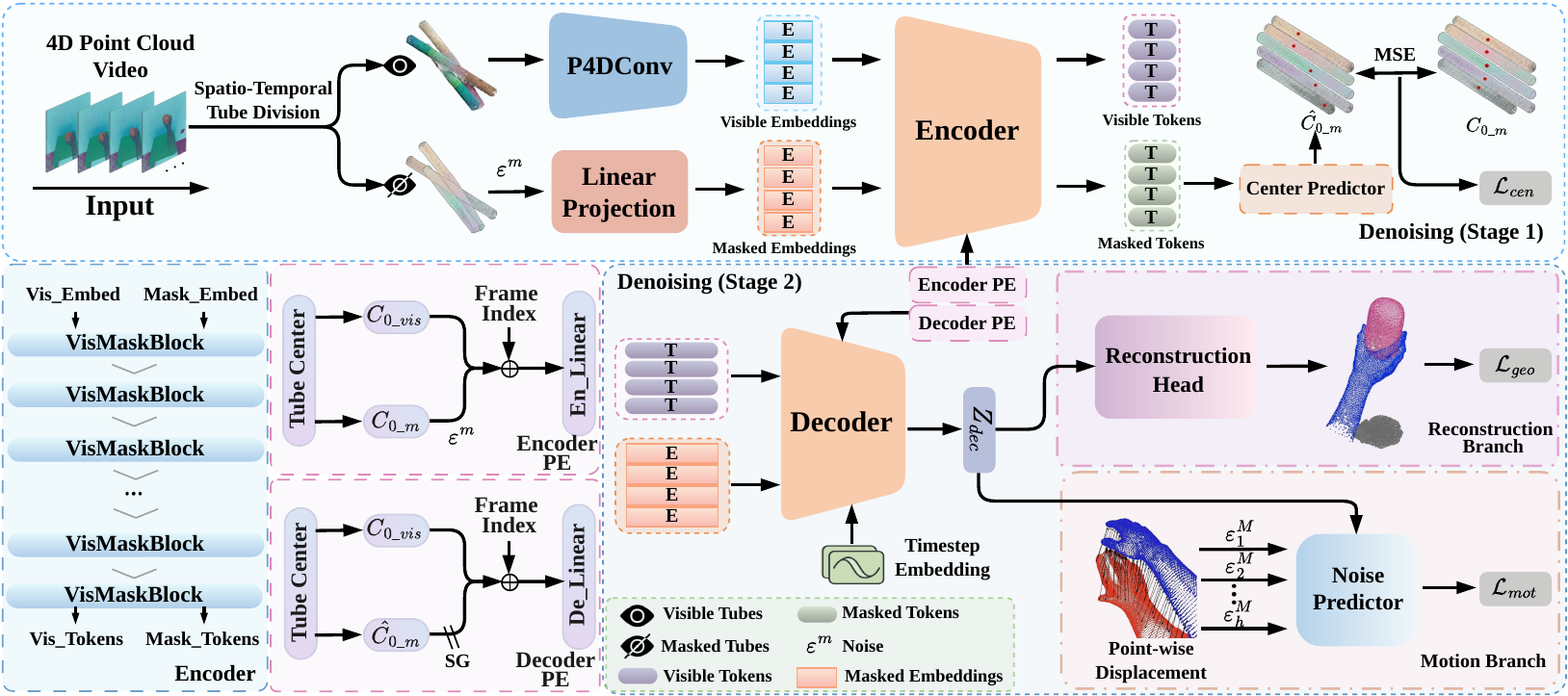}
  \caption{The pipeline of DiMP. Given a dynamic point cloud, spatiotemporal tubes are extracted and randomly masked. Visible tubes are encoded into visible embeddings via P4DConv, while masked tubes are first perturbed by adding noise to their point coordinates and then fed into a linear projection to obtain masked embeddings ($\mathbf{e}^m_\varepsilon$).In Stage~1, positional embeddings are constructed by combining ground-truth visible centers with noise-injected masked tube relative centers. These positional embeddings, together with visible and masked embeddings, are fed into a dual-stream encoder of stacked \texttt{VisMaskBlock} layers, where visible embeddings are updated via self-attention and masked embeddings query visible context via cross-attention, yielding visible tokens ($Z_v$) and masked tokens ($Z_m$). A \texttt{Center Predictor} head maps $Z_m$ to predicted clean centers $\hat{\mathbf{C}}0^m$, which are supervised against ground-truth centers using MSE ($\mathcal{L}{\mathrm{cen}}$), thereby eliminating positional leakage.In Stage~2, a decoder takes visible tokens and masked embeddings, along with positional embeddings derived from ground-truth visible centers and $\mathrm{sg}(\hat{\mathbf{C}}0^m)$, to produce $\mathbf{Z}{\mathrm{dec}}$. $\mathbf{Z}{\mathrm{dec}}$ supervises per-frame Chamfer reconstruction ($\mathcal{L}{\mathrm{geo}}$). In parallel, a motion diffusion branch corrupts inter-frame displacements at $h$ stratified timesteps, and a noise predictor conditioned on $\mathbf{Z}{\mathrm{dec}}$ and the diffusion timestep predicts the injected noise ($\mathcal{L}{\mathrm{mot}}$), providing balanced supervision from fine-grained local trajectory details at small $t$ to coarse global motion structure at large $t$.}
  \label{fig:pipeline}
\end{figure}

\textbf{Self-supervised Learning on Static Point Clouds.}
Self-supervised learning for static point clouds has evolved along two principal paradigms, contrastive learning and reconstruction learning. Contrastive methods such as PointContrast~\cite{xie2020pointcontrast} and CrossPoint~\cite{afham2022crosspoint} learn invariant representations via point-level or cross-modal objectives. On the reconstruction learning side, Point-BERT~\cite{yu2022point} and Point-MAE~\cite{pang2022masked} established the masked autoencoding framework for point clouds, subsequently extended by hierarchical and cross-modal variants~\cite{zhang2022pointm2ae,zhang2023learning,zhang2026pointcot,wang2026pointrft,qi2023contrast}. More recently, diffusion-based reconstruction methods further advanced this line. PointDif~\cite{zheng2024pointdif} guides iterative denoising from noisy geometry via aggregated backbone features, DiffPMAE~\cite{li2024diffpmae} conditions a diffusion decoder on MAE encoder latents to reconstruct masked regions, and Point-MaDi~\cite{xiaopoint} applies diffusion modeling to patch centers alongside conditional patch reconstruction. Nevertheless, these methods remain tailored to static geometry and do not address the two challenges that arise in the dynamic setting: positional leakage in decoder conditioning~\cite{sameni2023representation,wang2023droppos,caron2024location} and the absence of distributional motion modeling. DiMP addresses both limitations by introducing masked-only center diffusion for positional inference and diffusion-based motion supervision for dynamic trajectories.

\textbf{Self-supervised Learning on Dynamic Point Cloud.}
Early approaches to dynamic point cloud representation learning rely on contrastive objectives, including temporal order prediction~\cite{wang2021self} and reconstruction-contrastive combinations~\cite{sheng2023contrastive,shen2023pointcmp,sheng2023point}. More relevant to our work, generative methods extend masked autoencoding to the spatio-temporal domain. MaST-Pre~\cite{shen2023masked} supervises point-tube reconstruction alongside temporal cardinality difference prediction, and related works~\cite{zuo2025uni4d,han2024masked} further develop this paradigm. However, these methods still suffer from positional leakage because they directly inject ground-truth spatio-temporal tube center coordinates as decoder positional embeddings. At the same time, they rely on coarse-grained deterministic proxy targets for motion supervision. This reduces learning to fitting conditional expectations and collapses multiple plausible motion modes into a single deterministic solution. Accordingly, the distributional structure critical for discriminating action classes is discarded~\cite{gupta2018socialgan,salzmann2020trajectronplusplus}. Motivated by these limitations, DiMP is the first pretraining framework for dynamic point clouds that jointly resolves positional leakage and distributional motion modeling within a unified diffusion-based formulation.
\section{Method}

\begin{wrapfigure}[19]{l}{0.5\linewidth}
      \vspace{-0.5\baselineskip}
      \centering
      \includegraphics[width=\linewidth]{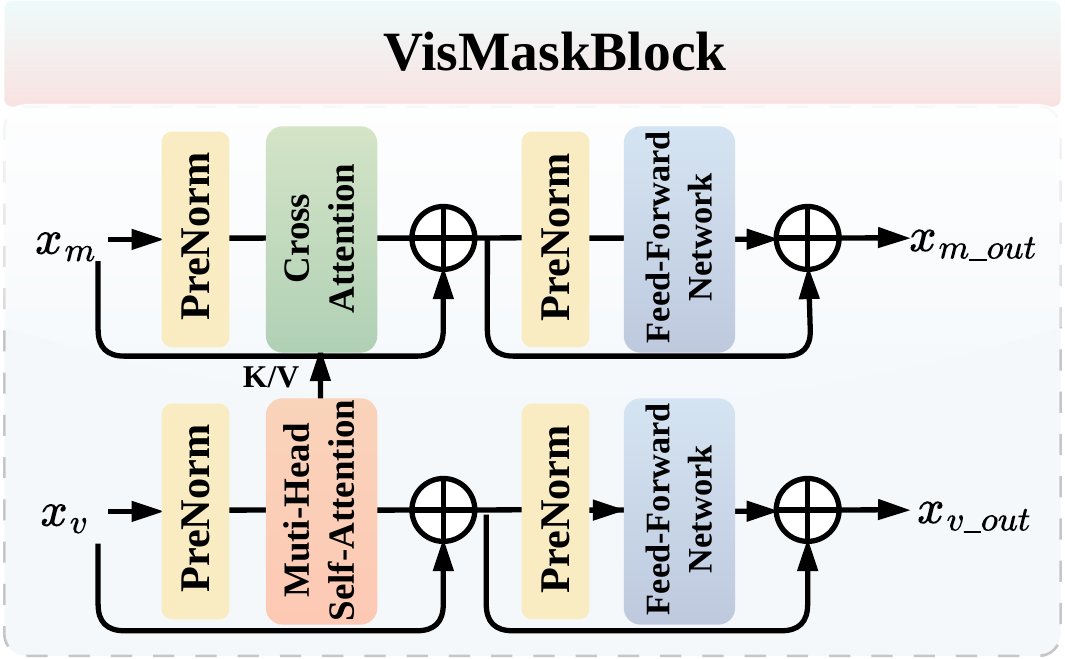}
      \caption{Structure of a \texttt{VisMaskBlock}. Visible tokens are updated by
      self-attention, while masked tokens query the visible stream through cross-attention.
      This asymmetric design preserves clean visible context and enables leakage-free
      inference for masked tubes.}
      \label{fig:vismaskblock}
\end{wrapfigure}

We present \textbf{Di}ffusion \textbf{M}asked \textbf{P}retraining (\textbf{DiMP}),
a unified self-supervised pretraining framework for dynamic point clouds that
incorporates two complementary diffusion processes into a masked autoencoding paradigm.
Figure~\ref{fig:pipeline} provides an overview.
Starting from a dynamic point cloud, DiMP partitions it into spatio-temporal tubes
and randomly masks a large fraction of them.
In Stage~1 (\S\ref{sec:stage1}), a shared diffusion timestep $t_c$ controls
noise injection into both the masked tube \emph{centers} and
the masked tube \emph{point coordinates}.
The dual-stream encoder then predicts clean masked centers from this jointly corrupted context.
In Stage~2 (\S\ref{sec:stage2}), an asymmetric decoder combines visible encoder features
with the stop-gradient predicted centers, conditioned on $t_c$, to produce
$\mathbf{Z}_{\mathrm{dec}}$, which simultaneously feeds a geometric reconstruction head
and a motion-diffusion head.
The motion branch employs a \emph{separate} diffusion process over inter-frame
displacements, with independently sampled stratified timesteps.
All three objectives are unified via weighted summation (\S\ref{sec:total-loss}).

\subsection{Stage 1: token encoding and spatio-temporal center denoising}
\label{sec:stage1}

\paragraph{Spatio-temporal tube tokenization.}
A dynamic point cloud is partitioned into $K$ spatio-temporal tubes.
For each visible tube $i \in \mathcal{V}$, the $l \times n$ points within
its spatial neighborhood are tokenized by a P4DConv operator~\cite{fan2021point},
yielding a $d$-dimensional embedding $\mathbf{e}_i \in \mathbb{R}^d$.
The raw tube centers $\mathbf{C} = \{\hat{\mathbf{p}}_i\}_{i=1}^K$ are
globally mean-centered as $\mathbf{C}_0 = \mathbf{C} - \bar{\mathbf{C}}$
with $\bar{\mathbf{C}} = K^{-1}\sum_i\hat{\mathbf{p}}_i$,
suppressing rigid-body translation while preserving inter-frame motion signals.
After masking, $\mathbf{C}_0$ splits into visible $\mathbf{C}_0^v\!\in\!\mathbb{R}^{K_v\times 3}$
and masked $\mathbf{C}_0^m\!\in\!\mathbb{R}^{K_m\times 3}$.
Critically, $\mathbf{C}_0^v$ is kept \emph{noise-free} throughout both stages:
corrupting visible centers would destabilize the motion-continuity prior
underpinning temporal inference (confirmed empirically in Table~\ref{tab:ablation-noise}).

\paragraph{Shared diffusion schedule and corrupted inputs.}
A single center-diffusion timestep $t_c \sim \mathcal{U}\{1,T\}$ governs all
Stage~1 corruption.
Two independent noise samples are drawn at this shared timestep and applied
to different modalities of the \emph{same} masked tubes:
$\mathbf{C}^m_{t_c}=\sqrt{\bar{\alpha}_{t_c}}\,\mathbf{C}_0^m+\sqrt{1-\bar{\alpha}_{t_c}}\,\boldsymbol{\epsilon}_c$
with $\boldsymbol{\epsilon}_c \sim \mathcal{N}(\mathbf{0},\mathbf{I})$, and
$\mathbf{P}^m_{t_c}=\sqrt{\bar{\alpha}_{t_c}}\,\mathbf{P}^m_0+\sqrt{1-\bar{\alpha}_{t_c}}\,\boldsymbol{\varepsilon}_c$
with $\boldsymbol{\varepsilon}_c \sim \mathcal{N}(\mathbf{0},\mathbf{I})$,
where $\{\bar{\alpha}_t\}$ is the DDPM cosine schedule shared by both,
and $\mathbf{P}^m_0 \in \mathbb{R}^{n \times 3}$ denotes the raw point coordinates
of a masked tube.
Here $\boldsymbol{\epsilon}_c$ perturbs the masked tube \emph{centers} used in
positional embeddings, and $\boldsymbol{\varepsilon}_c$ perturbs the raw masked
\emph{point coordinates} used in content embeddings.
The two samples are drawn independently but follow the exact same noise
schedule and timestep $t_c$, so both modalities are corrupted at a
consistent level.
The perturbed coordinates are then projected to a masked content embedding
$\mathbf{e}^m_{t_c} = \mathrm{LinearProj}(\mathbf{P}^m_{t_c}) \in \mathbb{R}^d$.

\paragraph{Dual-stream encoder.}
The encoder $f_e$ processes visible and masked tokens through a shared stack of
\texttt{VisMaskBlock} layers (Figure~\ref{fig:vismaskblock}), where each block
applies $\mathbf{Z}^v_{\ell+1} = \mathrm{SelfAttn}(\mathbf{Z}^v_\ell)$ over visible
tokens and $\mathbf{Z}^m_{\ell+1} = \mathrm{CrossAttn}(\mathbf{Z}^m_\ell, \mathbf{Z}^v_\ell)$
from masked tokens to the visible context.
Each token is formed by adding its content embedding to
$\phi_{\mathrm{enc}}(\mathbf{c},\, t_f)$, an MLP-based encoder positional embedding
that maps tube center $\mathbf{c}$ concatenated with frame index $t_f$.
The full encoder forward pass produces:
\begin{align}
  \mathbf{Z}^v &= f_e^v\!\left(
    \bigl\{\mathbf{e}_i + \phi_{\mathrm{enc}}(\mathbf{C}_0^v,\, t_f)\bigr\}_{i \in \mathcal{V}}
  \right) \in \mathbb{R}^{K_v \times d}, \label{eq:encoder-vis} \\
  \mathbf{Z}^m &= f_e^m\!\left(
    \bigl\{\mathbf{e}^m_{t_c} + \phi_{\mathrm{enc}}(\mathbf{C}^m_{t_c},\, t_f)\bigr\}_{i \in \mathcal{M}},\;
    \mathbf{Z}^v
  \right) \in \mathbb{R}^{K_m \times d}, \label{eq:encoder-mask}
\end{align}
where $f_e^v$ applies self-attention over visible tokens and $f_e^m$ applies
cross-attention from masked tokens to the visible context.
Since cross-attention keys and values are drawn exclusively from $\mathbf{Z}^v$,
each masked hidden state $\mathbf{Z}^m_j$ ($j \in \mathcal{M}$) constitutes a
structured approximation to the denoising posterior
$q(\mathbf{C}_{0,j}^m \mid \mathbf{C}^m_{t_c},\mathbf{Z}^v)$.

\paragraph{Center prediction and Stage~1 loss.}
Center Predictor $g_m$ maps $\mathbf{Z}^m$ to predicted
clean centers $\hat{\mathbf{C}}_0^m = g_m(\mathbf{Z}^m)$, supervised by MSE:
\begin{equation}
  \mathcal{L}_{\mathrm{cen}}
  = \frac{1}{K_m}
    \sum_{j=1}^{K_m}
    \bigl\|\hat{\mathbf{C}}_{0,j}^m - \mathbf{C}_{0,j}^m\bigr\|_2^2.
  \label{eq:loss-cen}
\end{equation}
By supervising only the masked branch and varying corruption via $t_c$,
the model acquires robust positional representations under spatio-temporal uncertainty.
The parameters of $g_m$ receive gradients \emph{exclusively} from
$\mathcal{L}_{\mathrm{cen}}$, as enforced by the stop-gradient mechanism
described next.

\subsection{Stage 2: geometric reconstruction and motion-aware diffusion}
\label{sec:stage2}

\paragraph{Decoder with predicted-center and coordinate-timestep conditioning.}
With leakage-free center predictions $\hat{\mathbf{C}}_0^m$ from Stage~1 in hand,
an asymmetric decoder $f_d$ combines encoded visible tokens, noisy masked content
embeddings, decoder positional embeddings from clean visible and predicted
masked centers, and a timestep embedding of $t_c$:
\begin{equation}
  \mathbf{Z}_{\mathrm{dec}}
  = f_d\!\left(
      \mathbf{Z}^v,\;
      \mathbf{e}^m_{t_c},\;
      \phi_{\mathrm{dec}}(\mathbf{C}^v_0,\, t_f),\;
      \phi_{\mathrm{dec}}\!\bigl(\mathrm{sg}(\hat{\mathbf{C}}_0^m),\, t_f\bigr),\;
      \tau(t_c)
    \right)
    \in \mathbb{R}^{K \times d},
  \label{eq:decoder}
\end{equation}
where three design choices are worth noting.
The decoder positional embedding $\phi_{\mathrm{dec}}$ uses clean $\mathbf{C}_0^v$ for
visible tubes and the stop-gradient prediction $\mathrm{sg}(\hat{\mathbf{C}}_0^m)$ for
masked tubes, ensuring that $g_m$ is updated only by $\mathcal{L}_{\mathrm{cen}}$ and not
by the reconstruction or motion objectives (ablated in Table~\ref{tab:ablation-sg}).
The timestep token $\tau(t_c)$ is injected into $f_d$ to condition the decoder on the
geometry corruption level.

\paragraph{Geometric reconstruction.}
The masked-tube slice of $\mathbf{Z}_{\mathrm{dec}}$ is passed through a linear
reconstruction head to yield predicted point coordinates
$\hat{\mathbf{P}}^{\mathrm{pre}}$, supervised by the per-frame $\ell_2$
Chamfer distance averaged over all masked tubes:
\begin{equation}
  \mathcal{L}_{\mathrm{geo}}
  = \frac{1}{K_m} \sum_{i \in \mathcal{M}}
    \frac{1}{l} \sum_{s=1}^{l}
    \mathrm{CD}\!\left(
      \hat{\mathbf{P}}^{\mathrm{pre}}_{i,s},\;
      \mathbf{P}^{\mathrm{gt}}_{i,s}
    \right),
  \label{eq:loss-geo}
\end{equation}
where $\mathrm{CD}(\mathbf{A},\mathbf{B})
= \tfrac{1}{|\mathbf{A}|}\sum_{a\in\mathbf{A}}\min_{b\in\mathbf{B}}\|a{-}b\|_2^2
+ \tfrac{1}{|\mathbf{B}|}\sum_{b\in\mathbf{B}}\min_{a\in\mathbf{A}}\|b{-}a\|_2^2$.
The full $\mathbf{Z}_{\mathrm{dec}}$ then conditions the motion-aware diffusion module.

\paragraph{Inter-frame displacement as motion target.}
We construct point-wise inter-frame displacement vectors as the motion
supervision signal. For each point $\mathbf{p}_i^t$ in frame $t$, its
displacement is $\Delta\mathbf{p}_i^t = \mathbf{p}_i^{t+1} - \mathbf{p}_i^{t}$
for $i = 1, \ldots, N$ and $t = 1, \ldots, L-1$.
Point correspondences are established offline using the pre-trained
correspondence model CorrNet3D~\cite{zeng2021corrnet3d}, yielding the full motion matrix
$\mathbf{M}_0 \in \mathbb{R}^{N \times (L-1) \times 3}$ where
$\mathbf{M}_{0,i,t} = \Delta\mathbf{p}_i^t$.
CorrNet3D serves solely as an offline labeler whose parameters are frozen throughout
DiMP training with no gradient flowing through it, and it is discarded after pretraining
alongside the decoder and diffusion heads.
The encoder never accesses CorrNet3D features and only receives the scalar displacement
values stored as static supervision labels on disk.

\paragraph{Motion diffusion.}
The motion branch employs a diffusion process independent from the Stage~1 schedule.
To guarantee uniform coverage of the full diffusion hierarchy, $[1,T]$ is
partitioned into $h$ near-equal intervals with $d = \lfloor T/h \rfloor$:
\begin{equation}
  Q_i =
  \begin{cases}
    [d{\cdot}i+1,\; d{\cdot}(i+1)], & 0 \le i < h-1,\\
    [d(h-1)+1,\; T],                 & i = h-1,
  \end{cases}
\end{equation}
so that $\bigcup_{i=0}^{h-1}Q_i = [1,T]$.
For each interval $Q_i$, a timestep $t^M_i \sim \mathcal{U}(Q_i)$ and a noise
sample $\boldsymbol{\epsilon}^M_i \sim \mathcal{N}(\mathbf{0},\mathbf{I})$
are drawn independently, and the corresponding forward marginal
$\mathbf{M}_{t^M_i} = \sqrt{\bar{\alpha}_{t^M_i}}\,\mathbf{M}_0
+ \sqrt{1-\bar{\alpha}_{t^M_i}}\,\boldsymbol{\epsilon}^M_i$
is evaluated at all $h$ levels simultaneously.

\paragraph{Noise prediction head and loss.}
A lightweight PCNet-based MLP with a reset-gate conditioning
mechanism~\cite{zheng2024pointdif} serves as the noise prediction head $h_m$.
It predicts the injected motion noise at interval $i$ as
$\hat{\boldsymbol{\epsilon}}^M_i = h_m\!\left(\mathbf{M}_{t^M_i},\; t^M_i,\; \mathbf{Z}_{\mathrm{dec}}\right)$,
conditioned on the full decoder output $\mathbf{Z}_{\mathrm{dec}}$.
The motion diffusion loss averages the DDPM objective across all $h$ intervals:
\begin{equation}
  \mathcal{L}_{\mathrm{mot}}
  = \frac{1}{h}\sum_{i=0}^{h-1}
    \mathbb{E}_{\substack{t^M_i \sim Q_i,\\
                          \mathbf{M}_0,\,\boldsymbol{\epsilon}^M_i}}\!
    \left[\bigl\|
      \boldsymbol{\epsilon}^M_i -
      h_m\!\left(
        \sqrt{\bar{\alpha}_{t^M_i}}\mathbf{M}_0
        {+}\sqrt{1{-}\bar{\alpha}_{t^M_i}}\boldsymbol{\epsilon}^M_i,\;
        t^M_i,\;
        \mathbf{Z}_{\mathrm{dec}}
      \right)
    \bigr\|^2\right],
  \label{eq:loss-mot}
\end{equation}
where the expectation is over motion sequences $\mathbf{M}_0$, motion noise
$\boldsymbol{\epsilon}^M_i$, and interval-stratified timesteps $t^M_i$, with
$\mathbf{Z}_{\mathrm{dec}}$ treated as a fixed conditioning signal.
Crucially, $\mathcal{L}_{\mathrm{mot}}$ need to propagate gradients through the decoder
all the way back to the encoder.
We partially mitigate the gradient-attenuation effect inherent to
decoder-conditioned objectives (discussed in Appendix~\ref{app:encoder-gradient})
by applying $\mathcal{L}_{\mathrm{mot}}$ simultaneously across $h$ stratified noise
levels within each training step, multiplying and diversifying the encoder gradient
signal from the motion branch.
Small-$t^M_i$ losses drive fine-grained local trajectory recovery, and large-$t^M_i$
losses compel coarse semantic motion understanding.
We set $h = 4$ by default (sensitivity analyzed in Table~\ref{tab:ablation-intervals}).

\paragraph{Theoretical motivation.}
\begin{wrapfigure}[12]{r}{0.5\linewidth}
  \vspace{-0.5\baselineskip}
  \centering
  \includegraphics[width=\linewidth]{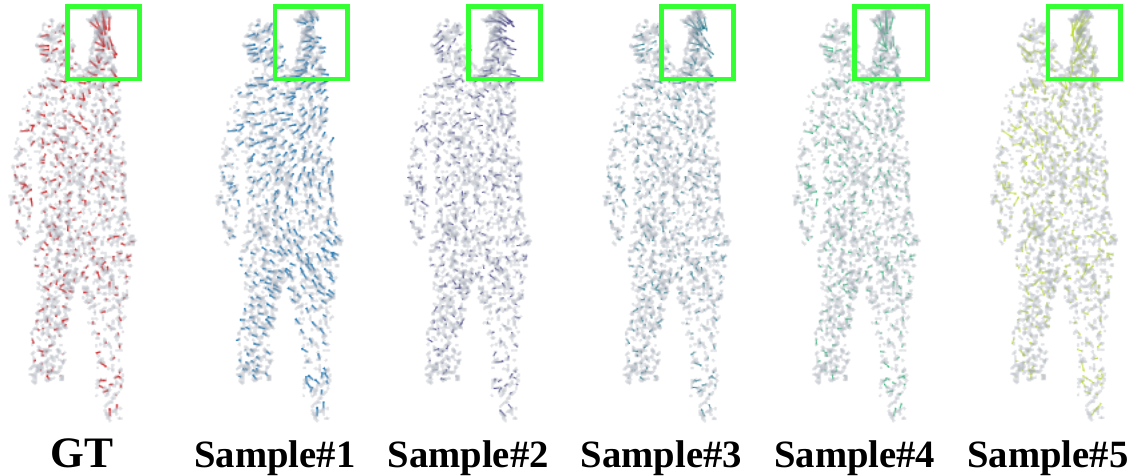}
  \caption{Five samples from $p_\theta(\mathbf{M}\mid\mathbf{Z}_{\mathrm{dec}})$ with
  ground truth (red) for the same input. Sample diversity confirms multimodal
  trajectory modeling rather than mean collapse.}
  \label{fig:diversity-main}
\end{wrapfigure}
The following result (Appendix~\ref{app:theory}) motivates DiMP's motion branch.

\begin{proposition}[Bayes-error obstruction of mean regression, informal]
\label{prop:info-loss-main}
If two action classes $a_1, a_2$ share the same class-conditional mean trajectory
$\mathbb{E}[\mathbf{M}\mid A{=}a_1]=\mathbb{E}[\mathbf{M}\mid A{=}a_2]$ but differ
in their conditional distributions, then a representation trained purely by
mean-regression motion supervision cannot separate $a_1$ from $a_2$ above chance.
A representation that targets the full distribution $p(\mathbf{M}\mid\mathbf{Z}_{\mathrm{dec}})$
retains more mutual information with the action label $A$.
\end{proposition}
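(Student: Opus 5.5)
To prove Proposition~\ref{prop:info-loss-main}, I first turn the informal statement into a precise one and then argue through the data-processing inequality. The setting is a joint law over $(A,\mathbf{X},\mathbf{M})$, where $\mathbf{X}$ is the masked spatio-temporal input and $\mathbf{M}$ the motion target. A representation $\mathbf{Z}=f(\mathbf{X})$ trained \emph{purely} by mean-regression motion supervision is modelled as one whose information about motion passes entirely through the regression optimum, i.e.\ $\mathbf{Z}$ is a measurable function of the squared-loss minimiser $\mu(\mathbf{X})=\mathbb{E}[\mathbf{M}\mid\mathbf{X}]$. The hypothesis is read at the level the supervision can see. If the only motion-relevant information available to the regression target is the class, so that $\mu(\mathbf{X})=\mathbb{E}[\mathbf{M}\mid A]$, then equality of the class-conditional means makes $\mu$ constant across $a_1,a_2$. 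This modelling choice is the crux, and I will state it explicitly as an assumption of the formal version in Appendix~\ref{app:theory}. Without it the statement is false, since a generic encoder may carry class information through side channels that mean regression does not remove.

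\textbf{Step 1.} The minimiser of $\mathbb{E}\|\mathbf{M}-g(\mathbf{Z})\|^2$ is $g^\star(\mathbf{Z})=\mathbb{E}[\mathbf{M}\mid\mathbf{Z}]$. The squared loss therefore depends on the joint law only through first conditional moments, and any two conditional laws with the same mean yield identical losses and identical gradients at the optimum.

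\textbf{Step 2.} Restrict attention to the two classes. Because $\mathbb{E}[\mathbf{M}\mid A=a_1]=\mathbb{E}[\mathbf{M}\mid A=a_2]$, the statistic $\mu$ takes the same value under both classes. Any $\mathbf{Z}=\psi(\mu)$ is then conditionally identically distributed given $A=a_1$ and given $A=a_2$, which gives $I(\mathbf{Z};A)=0$ on this pair. By Le Cam's two-point bound, or directly because the total variation distance is zero, every classifier built on $\mathbf{Z}$ has Bayes error equal to the prior minimum, so it performs at chance.

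\textbf{Step 3.} For the distributional branch, suppose $\mathbf{Z}_{\mathrm{dec}}$ is sufficient for the conditional law, i.e.\ $p_\theta(\mathbf{M}\mid\mathbf{Z}_{\mathrm{dec}})=p(\mathbf{M}\mid\mathbf{X})$. This is the population optimum of the DDPM noise-prediction objective, because the objective upper-bounds the negative ELBO, whose minimiser is the true conditional; this holds up to the variational gap, matching the phrase ``under a variational surrogate''. Since $p(\mathbf{M}\mid A=a_1)\neq p(\mathbf{M}\mid A=a_2)$, the map $\mathbf{X}\mapsto p(\mathbf{M}\mid\mathbf{X})$ is not constant across the classes, so $I(\mathbf{Z}_{\mathrm{dec}};A)>0$. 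Moreover, $\mu$ is a function of the conditional law, namely its mean. The data-processing inequality then gives $I(\mathbf{Z}_{\mathrm{dec}};A)\ge I(\mu(\mathbf{X});A)$, with strict inequality in the two-class case. This establishes the claim of strictly more mutual information.

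The main obstacle is the modelling step: justifying that a network ``trained purely by mean regression'' can be represented as a function of $\mu$, and that the DDPM loss actually attains the sufficient conditional law. I would handle both idealisations by stating them as explicit assumptions of the formal proposition: infinite capacity, population optimum, and the representation being the minimal sufficient statistic for its training objective. I would also add a remark that, with finite training, the result becomes an inequality that holds only up to the ELBO gap.
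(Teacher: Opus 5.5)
Your Step~2 is the paper's part~(i) in different notation. Once you impose $\mu(\mathbf X)=\mathbb E[\mathbf M\mid A]$, $\mu$ is exactly the paper's class-mean statistic $\bar{\mathbf M}$, which is constant on $\{a_1,a_2\}$, so the posterior equals the prior.

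The routes diverge after that. The paper compares $\bar{\mathbf M}$ with the raw motion $\mathbf M$ by a posterior-entropy argument; it notes that DPI is unavailable because $\bar{\mathbf M}$ is a function of $A$, not of $\mathbf M$. It then transfers the obstruction to a regressor output $\hat{\mathbf M}(\mathbf Z)$ only in a remark, quantifies it by Fano with the exact gap $I(A;\mathbf M\mid\hat{\mathbf M})$, and links DDPM training to $p(\mathbf M\mid\mathbf Z_{\mathrm{dec}})$ only informally.

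You instead compare two representations of the same input. A sufficient $\mathbf Z_{\mathrm{dec}}$ determines $p(\mathbf M\mid\mathbf X)$, whose mean is $\mu(\mathbf X)$, so the two representations are nested and DPI applies directly. This buys two things:
\begin{itemize}
\item a statement about $\mathbf Z_{\mathrm{dec}}$ itself, which is what the second sentence of the proposition asks for;
\item an inequality $I(A;\mathbf Z_{\mathrm{dec}})\ge I(A;\mu)$ that holds over all classes.
\end{itemize}
The paper's non-nested comparison, by contrast, is sound only on the pair. Its global claim $I(A;\bar{\mathbf M})<I(A;\mathbf M)$ can fail when a third class has a distinct mean but a nearly identical law: $\bar{\mathbf M}$ then identifies that class exactly, while $\mathbf M$ barely does. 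The price of your route is heavier idealization (population optimum, sufficiency, $\mathbf Z$ a function of $\mu$). The paper needs only the joint law of $(A,\mathbf M)$ and yields a quantitative Bayes-error bound.

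One inference in Step~3 needs a hypothesis you did not list. The fact $p(\mathbf M\mid A=a_1)\neq p(\mathbf M\mid A=a_2)$ does not by itself make $\Pi:=p(\mathbf M\mid\mathbf X)$ depend on $A$. For example, suppose $\mathbf X$ is independent of $(A,\mathbf M)$ and all classes share one mean. Then your mean-level assumption holds, yet $\Pi=p(\mathbf M)$ is constant and $I(\mathbf Z_{\mathrm{dec}};A)=0$.

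You can close this in either of two ways:
\begin{itemize}
\item assume $\mathbf M\perp A\mid\mathbf X$ (e.g.\ $A$ is determined by the input sequence);
\item state your modelling assumption at the level of laws, $p(\mathbf M\mid\mathbf X)=p(\mathbf M\mid A)$.
\end{itemize}
Under either, the tower property gives $p(\mathbf M\mid A=a)=\mathbb E[\Pi\mid A=a]$. Equal laws of $\Pi$ under $a_1$ and $a_2$ would then force equal class-conditional motion laws. Hence $I(\Pi;A)>0$ on the pair, and $I(\mathbf Z_{\mathrm{dec}};A)\ge I(\Pi;A)>0$.

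Separately, the simplified $\boldsymbol\epsilon$-loss is a reweighted negative ELBO, not an upper bound on it. Your sufficiency conclusion still holds, but the justification should be per timestep. Jointly over head and encoder, each term is minimized exactly when the head outputs $\mathbb E[\boldsymbol\epsilon\mid\mathbf M_t,\mathbf X]$ as a function of $(\mathbf M_t,\mathbf Z_{\mathrm{dec}})$. By Tweedie's formula, this pins down $p(\mathbf M\mid\mathbf X)$ as a function of $\mathbf Z_{\mathrm{dec}}$.
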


\noindent Whenever two actions differ in how their trajectories vary rather than only in their
means, deterministic regression discards the distinguishing signal.
DiMP's variational surrogate avoids this by targeting the full conditional distribution.
Figure~\ref{fig:diversity-main} visualizes five independent samples drawn from the learned
motion distribution for the same input context, confirming that the model captures
multimodal trajectories rather than collapsing to a single estimate.
Further evidence is provided in Appendix~\ref{app:distributional-evidence}.

\subsection{Overall training objective}
\label{sec:total-loss}

The three losses are unified via weighted summation:
\begin{equation}
  \mathcal{L}
  = \mathcal{L}_{\mathrm{geo}}
  + \gamma_{\mathrm{cen}}\,\mathcal{L}_{\mathrm{cen}}
  + \lambda_{\mathrm{mot}}\,\mathcal{L}_{\mathrm{mot}},
  \label{eq:total-loss}
\end{equation}
where $\gamma_{\mathrm{cen}}$ weights the center diffusion loss and
$\lambda_{\mathrm{mot}}$ weights the motion diffusion loss.
Geometric reconstruction ($\mathcal{L}_{\mathrm{geo}}$) provides dense structural
supervision at unit weight, center diffusion ($\mathcal{L}_{\mathrm{cen}}$) acts
as a positional regularizer, and motion diffusion
($\mathcal{L}_{\mathrm{mot}}$) constitutes the primary novel contribution.
We set $\gamma_{\mathrm{cen}} = 0.1$ and $\lambda_{\mathrm{mot}} = 1.0$ by default
(sensitivity in Tables~\ref{tab:ablation-weight-cen} and~\ref{tab:ablation-weight-mot}).
All components are optimized jointly.
After pretraining, the decoder and both diffusion heads are discarded.
The encoder $f_e$ alone is retained for downstream fine-tuning.
\begin{table*}[!t]
  \caption{4D action segmentation results on the HOI4D dataset. All methods use 150 frames as input.
  Metrics: framewise accuracy (Acc), segmental edit distance (Edit), and F1 scores at overlap
  thresholds of 10\%, 25\%, and 50\%. Numbers in parentheses for our method denote absolute
  improvements over the backbone. $^\dagger$ denotes results reproduced from the public source code. Same as follows. Multi-seed statistics for DiMP and M2PSC are provided in
  Table~\ref{tab:multiseed}.}
  \label{tab:action-seg}
  \centering
  \scriptsize
  \setlength{\tabcolsep}{8pt}
  \begin{tabularx}{\textwidth}{Xlccccc}
    \toprule
    Method & Venue & Acc & Edit & F1@10 & F1@25 & F1@50 \\
    \midrule
    PPTr~\cite{wen2022point}                              & ECCV22        & 77.4 & 80.1 & 81.7 & 78.5 & 69.5 \\
    \hspace{0.8em}+ STRL~\cite{huang2021spatio}           & ICCV21        & 78.4 & 79.1 & 81.8 & 78.6 & 69.7 \\
    \hspace{0.8em}+ VideoMAE~\cite{tong2022videomae}      & NeurIPS22     & 78.6 & 80.2 & 81.9 & 78.7 & 69.9 \\
    \hspace{0.8em}+ C2P~\cite{zhang2023complete}          & CVPR23        & 81.1 & 84.0 & 85.4 & 82.5 & 74.1 \\
    \rowcolor{lightblue}
    \hspace{0.8em}+ DiMP (Ours) & \textbf{This Paper}
      & \textbf{83.97}\,\scalebox{0.55}{\textcolor{magenta}{$(+6.57)$}}
      & \textbf{86.15}\,\scalebox{0.55}{\textcolor{magenta}{$(+6.05)$}}
      & \textbf{89.13}\,\scalebox{0.55}{\textcolor{magenta}{$(+7.43)$}}
      & \textbf{86.64}\,\scalebox{0.55}{\textcolor{magenta}{$(+8.14)$}}
      & \textbf{81.43}\,\scalebox{0.55}{\textcolor{magenta}{$(+11.93)$}} \\
    \midrule
    P4Transformer~\cite{fan2021point}                     & CVPR21        & 71.2 & 73.1 & 73.8 & 69.2 & 58.2 \\
    \hspace{0.8em}+ C2P~\cite{zhang2023complete}          & CVPR23        & 73.5 & 76.8 & 77.2 & 72.9 & 62.4 \\
    \hspace{0.8em}+ MaST-Pre~\cite{shen2023masked}$^\dagger$        & ICCV23          & 74.1 & 75.4 & 76.6 & 73.4 & 63.8 \\
    \hspace{0.8em}+ M2PSC~\cite{han2024masked}$^\dagger$           & ECCV24          & 75.9 & 77.1 & 78.4 & 74.1 & 65.9 \\
    \rowcolor{lightblue}
    \hspace{0.8em}+ DiMP (Ours) & \textbf{This Paper}
      & \textbf{82.41}\,\scalebox{0.55}{\textcolor{magenta}{$(+11.21)$}}
      & \textbf{84.90}\,\scalebox{0.55}{\textcolor{magenta}{$(+11.80)$}}
      & \textbf{87.26}\,\scalebox{0.55}{\textcolor{magenta}{$(+13.46)$}}
      & \textbf{85.92}\,\scalebox{0.55}{\textcolor{magenta}{$(+16.72)$}}
      & \textbf{80.77}\,\scalebox{0.55}{\textcolor{magenta}{$(+22.57)$}} \\
    \bottomrule
  \end{tabularx}
\end{table*}

\section{Experiments}

\subsection{Downstream experiments}
\label{sec:downstream}

\paragraph{4D Action Segmentation on HOI4D.} As reported in Table~\ref{tab:action-seg}, DiMP achieves consistent improvements over all prior pre-training approaches on HOI4D, with gains especially pronounced in the Edit and F1 metrics, reflecting that a probabilistic motion objective promotes stronger temporal coherence than deterministic proxy targets. DiMP also yields substantially larger gains on the weaker P4Transformer backbone than on PPTr, while the final performance of both converges after pretraining, suggesting that P4Transformer's limited receptive field leaves greater capacity for DiMP's motion supervision to fill.

\paragraph{4D Semantic Segmentation on HOI4D.} 
To verify that DiMP yields representations effective for fine-grained point-level understanding,
we conduct further experiments on the HOI4D semantic segmentation benchmark.
Mean Intersection over Union (mIoU, \%) is used as the evaluation metric.
As shown in Table~\ref{tab:sem-seg}, DiMP outperforms all prior pre-training methods.
This result shows that the probabilistic motion diffusion objective learns richer
geometric representations, since modeling temporal displacement distributions encourages
the encoder to capture fine-grained boundary geometry that is informative for both motion
understanding and static per-point labeling.
These representations transfer effectively to fine-grained per-point recognition tasks.

\begin{table*}[!t]
  \centering
  \begin{minipage}[t]{0.46\textwidth}
    \centering
    \captionof{table}{4D semantic segmentation results on HOI4D.}
    \label{tab:sem-seg}
    \footnotesize
    \setlength{\tabcolsep}{3pt}
    \begin{tabularx}{\linewidth}{>{\raggedright\arraybackslash}X l c}
      \toprule
      Method & Venue & mIoU (\%) \\
      \midrule
      PPTr~\cite{wen2022point}                          & ECCV22    & 41.0 \\
      \hspace{0.8em}+ STRL~\cite{huang2021spatio}       & ICCV21    & 41.2 \\
      \hspace{0.8em}+ VideoMAE~\cite{tong2022videomae}  & NeurIPS22 & 41.3 \\
      \hspace{0.8em}+ C2P~\cite{zhang2023complete}      & CVPR23    & 42.3 \\
      \rowcolor{lightblue}
      \hspace{0.8em}+ DiMP (Ours) & \textbf{This Paper} & \textbf{48.2}\,{\scalebox{0.55}{\textcolor{magenta}{$(+7.2)$}}} \\
      \midrule
      P4Transformer~\cite{fan2021point}                 & CVPR21    & 40.1 \\
      \hspace{0.8em}+ MaST-Pre~\cite{shen2023masked}$^\dagger$ & ICCV23 & 40.3 \\
      \hspace{0.8em}+ C2P~\cite{zhang2023complete}      & CVPR23    & 41.4 \\
      \hspace{0.8em}+ M2PSC~\cite{han2024masked}$^\dagger$ & ECCV24  & 42.3 \\
      \rowcolor{lightblue}
      \hspace{0.8em}+ DiMP (Ours) & \textbf{This Paper} & \textbf{47.6}\,{\scalebox{0.55}{\textcolor{magenta}{$(+7.5)$}}} \\
      \bottomrule
    \end{tabularx}
  \end{minipage}\hfill
  \begin{minipage}[t]{0.50\textwidth}
    \centering
    \captionof{table}{3D action recognition accuracy (\%) on MSRAction-3D.}
    \label{tab:msr}
    \footnotesize
    \setlength{\tabcolsep}{3pt}
    \begin{tabularx}{\linewidth}{>{\raggedright\arraybackslash}X l c}
      \toprule
      Method & Venue & Accuracy (\%) \\
      \midrule
      PSTNet~\cite{fan2021pstnet}                           & ICLR21  & 91.20 \\
      \hspace{0.8em}+ PointCMP~\cite{shen2023pointcmp}     & CVPR23  & 93.27 \\
      \midrule
      PST-Transformer~\cite{fan2022point}                   & TPAMI22 & 93.73 \\
      \hspace{0.8em}+ M2PSC~\cite{han2024masked}            & ECCV24  & 94.84 \\
      \hspace{0.8em}+ MaST-Pre~\cite{shen2023masked}        & ICCV23  & 94.08 \\
      \rowcolor{lightblue}
      \hspace{0.8em}+ DiMP (Ours) & \textbf{This Paper} & \textbf{95.51}\,{\scalebox{0.55}{\textcolor{magenta}{$(+1.78)$}}} \\
      \midrule
      P4Transformer~\cite{fan2021point}                     & CVPR21  & 90.94 \\
      \hspace{0.8em}+ M2PSC~\cite{han2024masked}            & ECCV24  & 93.03 \\
      \rowcolor{lightblue}
      \hspace{0.8em}+ DiMP (Ours) & \textbf{This Paper} & \textbf{93.97}\,{\scalebox{0.55}{\textcolor{magenta}{$(+3.03)$}}} \\
      \bottomrule
    \end{tabularx}
  \end{minipage}
\end{table*}

\begin{table*}[!t]
  \caption{Online action segmentation results on HOI4D under the causally constrained online
  inference setting. All methods operate on sequences of 150 frames. Runtime is measured in
  Clips/s, higher values indicate better real-time feasibility.}
  \label{tab:online}
  \centering
  \scriptsize
  \setlength{\tabcolsep}{5pt}
  \begin{tabularx}{\textwidth}{Xlcccccc}
    \toprule
    Method & Venue & Acc & Edit & F1@10 & F1@25 & F1@50 & Clips/s \\
    \midrule
    PointNet++~\cite{qi2017pointnet}         & CVPR17  & 50.8 & 40.2 & 42.6 & 35.3 & 23.5 & 35.8 \\
    P4Transformer~\cite{fan2021point}        & CVPR21  & 66.7 & 62.0 & 65.3 & 59.8 & 46.3 & 26.6 \\
    P4Mamba~\cite{liu2026pointnet4d}         & WACV26  & 71.0 & 78.5 & 77.8 & 73.4 & 61.5 & 33.4 \\
    PPTr~\cite{wen2022point}                 & ECCV22  & 69.7 & 64.5 & 69.4 & 64.8 & 52.9 & -- \\
    NSM4D~\cite{dong2023nsm4d}               & arXiv23 & 67.8 & 63.2 & 68.0 & 63.7 & 51.9 & -- \\
    NSM4D (600 frames)~\cite{dong2023nsm4d}  & arXiv23 & 71.3 & 68.0 & 72.1 & 68.1 & 56.5 & -- \\
    PointNet4D~\cite{liu2026pointnet4d}      & WACV26  & 70.3 & 72.3 & 73.7 & 69.1 & 57.2 & 31.8 \\
    4DMAP~\cite{liu2026pointnet4d}           & WACV26  & 72.4 & 78.7 & 78.8 & 74.8 & 63.6 & 31.8 \\
    \midrule
    \rowcolor{lightblue}
    P4Transformer + DiMP (Ours) & \textbf{This Paper}
      & \textbf{80.35}\,\scalebox{0.55}{\textcolor{magenta}{$(+13.65)$}}
      & \textbf{82.19}\,\scalebox{0.55}{\textcolor{magenta}{$(+20.19)$}}
      & \textbf{84.77}\,\scalebox{0.55}{\textcolor{magenta}{$(+19.47)$}}
      & \textbf{83.21}\,\scalebox{0.55}{\textcolor{magenta}{$(+23.41)$}}
      & \textbf{73.50}\,\scalebox{0.55}{\textcolor{magenta}{$(+27.20)$}}
      & \textbf{27.3}\,\scalebox{0.55}{\textcolor{magenta}{$(+0.7)$}} \\
    \bottomrule
  \end{tabularx}
\end{table*}
\paragraph{3D Action Recognition on MSRAction-3D.} 
As shown in Table~\ref{tab:msr}, DiMP outperforms methods that rely on deterministic motion objectives and achieves competitive performance against state-of-the-art pre-training methods on MSRAction-3D. DiMP yields larger gains on the weaker P4Transformer than on PST-Transformer, consistent with the backbone-dependent pattern observed in action segmentation. This reflects that P4Transformer's simpler temporal modeling leaves greater representational headroom for DiMP's distributional motion supervision to fill, whereas PST-Transformer's stronger attention already captures a portion of the motion statistics that pretraining targets.

\paragraph{Online Inference on HOI4D.}
\begin{wrapfigure}[16]{r}{0.5\linewidth}
      \vspace{-0.5\baselineskip}
      \centering
      \includegraphics[width=\linewidth]{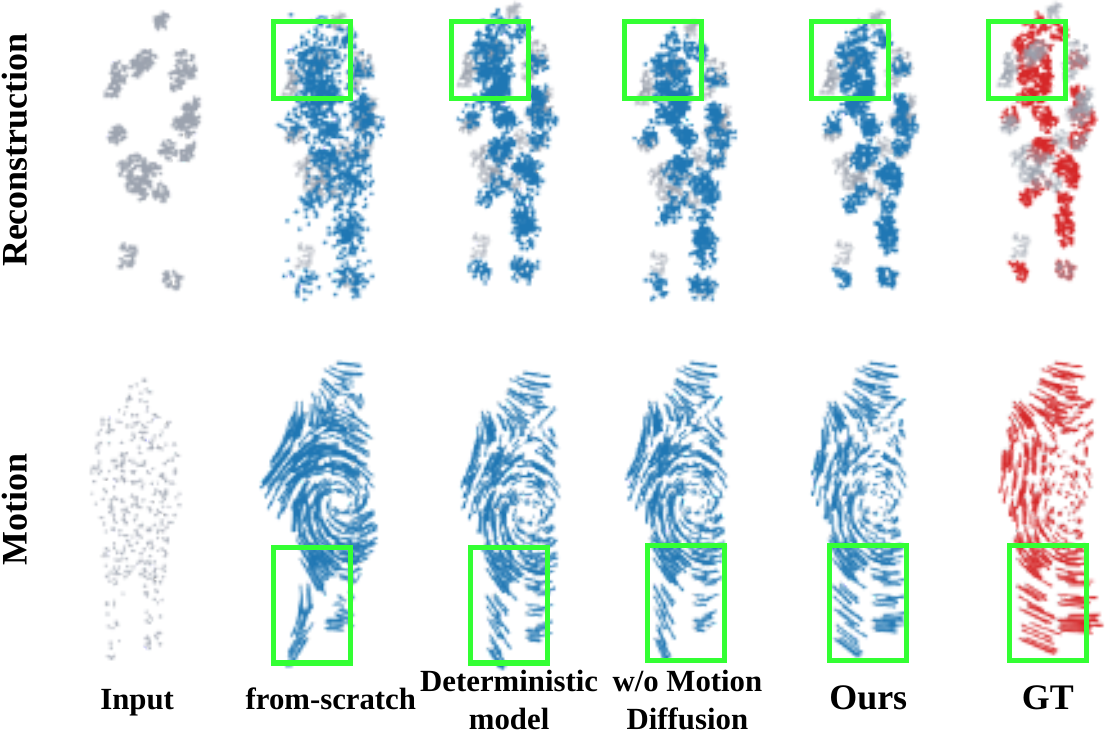}
      \caption{Qualitative comparison under different pretraining strategies.
      Full DiMP yields results visibly closer to ground truth, while removing motion diffusion
      causes clear degradation.}
      \label{fig:ablation-vis}
    \end{wrapfigure}
We evaluate DiMP under the causally constrained online inference protocol of
PointNet4D~\cite{liu2026pointnet4d}.
As shown in Table~\ref{tab:online}, DiMP consistently outperforms all baselines.
Notably, the online gains in Edit and F1@50 exceed the corresponding offline gains by
approximately 5--6 points, suggesting that a learned motion distribution provides an
especially strong predictive prior when the model must reason about future motion from
past context alone without access to future frames.
This is expected, as causal constraints force the model to predict future motion
from past context alone, a setting where a learned motion distribution provides
a stronger prior than a deterministic estimate.

Further dataset descriptions, evaluation splits, task protocols, and training setups for all downstream experiments above are summarized in Appendix~\ref{app:exp-details}.

%
%
%
%

\subsection{Ablation studies}
\label{sec:ablation}

All ablation experiments use P4Transformer as the backbone under the same pre-training and fine-tuning setup summarized in Appendix~\ref{app:exp-details}.

\paragraph{Effect of Visible Center Corruption.}

We apply Gaussian noise at varying relative scales to visible tube centers and measure
the effect on downstream performance (Table~\ref{tab:ablation-noise}).
Performance degrades monotonically as visible center corruption increases, confirming
that corrupting visible coordinates undermines the motion-continuity prior and impairs
cross-frame geometric association.

\paragraph{Factorial interaction of center and motion diffusion.}
Table~\ref{tab:ablation-prerequisite} shows that motion diffusion alone yields near-zero gain, whereas enabling center diffusion first allows motion diffusion to contribute substantially, a pattern that holds consistently across both HOI4D and MSRAction-3D, confirming that center diffusion is a structural prerequisite rather than a dataset-specific effect. Further theoretical analysis is provided in Appendix~\ref{app:leakage-prerequisite}.
\paragraph{Progressive Component Analysis.}

Table~\ref{tab:ablation-prog} shows a progressive ablation from the
MaST-Pre~\cite{shen2023masked} baseline. Center diffusion gives the clearest structural
improvement, suggesting that removing positional leakage forces the model to
infer masked tube locations from context. Patch diffusion and motion diffusion
further strengthen the representation, with the full stratified variant
achieving the best overall trade-off.

\begin{table*}[t]
  \centering
  \begin{minipage}[t]{0.33\textwidth}
    \centering
    \captionof{table}{Effect of applying Gaussian noise to visible tube centers at varying
    relative strengths. Noise level 0.0 corresponds to the DiMP design, and 1.0 applies the same schedule as masked centers.}
    \label{tab:ablation-noise}
    \footnotesize
    \setlength{\tabcolsep}{3pt}
    \begin{tabularx}{\linewidth}{Xcc}
      \toprule
      Visible Noise Level & F1@50 $\uparrow$ & mIoU $\uparrow$ \\
      \midrule
      \rowcolor{lightblue}
      0.0 (DiMP)          & \textbf{80.77}    & \textbf{47.6}   \\
      0.25                & 77.91             & 43.2            \\
      0.50                & 75.43             & 41.9            \\
      1.0                 & 74.16             & 39.8            \\
      \bottomrule
    \end{tabularx}
  \end{minipage}\hfill
  \begin{minipage}[t]{0.64\textwidth}
    \centering
    \captionof{table}{Factorial ablation of center diffusion and motion diffusion on HOI4D action segmentation and MSRAction-3D action recognition. All other DiMP components are held fixed. The results are used to  verify the prerequisite role of center diffusion in enabling effective motion modeling.
    }
    \label{tab:ablation-prerequisite}
    \footnotesize
    \setlength{\tabcolsep}{6pt}
    \begin{tabularx}{\linewidth}{cc *{3}{>{\centering\arraybackslash}X}}
      \toprule
      \multirow{2}{*}{CD} & \multirow{2}{*}{MD}
        & \multicolumn{2}{c}{HOI4D} & MSRAction-3D \\
      \cmidrule(lr){3-4}\cmidrule(l){5-5}
        & & F1@50\,$\uparrow$ & mIoU\,$\uparrow$ & Acc\,$\uparrow$ \\
      \midrule
      $\times$ & $\times$         & 75.37 & 41.3 & 91.63 \\
      $\times$ & $\checkmark$     & 75.43 & 41.2 & 91.72 \\
      \midrule
      $\checkmark$ & $\times$     & 77.28 & 43.5 & 92.46 \\
      \rowcolor{lightblue}$\checkmark$ & $\checkmark$ & \textbf{80.77} & \textbf{47.6} & \textbf{93.97} \\
      \bottomrule
    \end{tabularx}
  \end{minipage}
\end{table*}

\begin{table*}[t]
  \centering
  \caption{Progressive component ablation on HOI4D. Each row adds one component
    to the previous configuration. VM: VisMaskTransformer encoder; PD: patch diffusion
    decoder; CD: masked-only center diffusion.}
  \label{tab:ablation-prog}
  \footnotesize
  \renewcommand{\arraystretch}{1.00}
  \setlength{\tabcolsep}{12pt}
  \begin{tabularx}{\textwidth}{
      >{\raggedright\arraybackslash}X
      c c c c c}
    \toprule
    Configuration
      & VM\enspace PD\enspace CD
      & Motion
      & F1@50 $\uparrow$
      & mIoU $\uparrow$ \\
    \midrule
    MaST-Pre baseline$^\dagger$
      & $\times$\enspace$\times$\enspace$\times$         & Deterministic Regression       & 63.84 & 40.3 \\
    + Joint linear PE
      & $\times$\enspace$\times$\enspace$\times$         & Deterministic Regression       & 65.13 & 40.4 \\
    + VisMask encoder
      & $\checkmark$\enspace$\times$\enspace$\times$     & Deterministic Regression       & 65.37 & 41.3 \\
    + Center diffusion
      & $\checkmark$\enspace$\times$\enspace$\checkmark$ & Deterministic Regression       & 72.28 & 43.5 \\
    + Patch diffusion
      & $\checkmark$\enspace$\checkmark$\enspace$\checkmark$ & Deterministic Regression   & 74.17 & 44.1 \\
    + Single-step motion
      & $\checkmark$\enspace$\checkmark$\enspace$\checkmark$ & Single-step DDPM & 78.41 & 46.1 \\
    \rowcolor{lightblue}
    DiMP full ($h{=}4$)
      & $\checkmark$\enspace$\checkmark$\enspace$\checkmark$ & Stratified DDPM & \textbf{80.77} & \textbf{47.6} \\
    \bottomrule
  \end{tabularx}
\end{table*}

Additional ablations on motion diffusion branch design choices, positional leakage,
stop-gradient mechanism, hyperparameter sensitivity and gradient propagation analysis are deferred to Appendix~\ref{app:more-ablation}.

\section{Conclusion}

In this paper, we present DiMP, the first pretraining framework for dynamic point clouds that leverages DDPMs for distributional motion modeling. By reformulating inter-frame displacement supervision as a noise prediction task conditioned on decoded representations, DiMP targets the full conditional distribution over motion trajectories under a variational surrogate, rather than collapsing to a deterministic mean estimate. A key enabler is the center diffusion strategy, which avoids spatio-temporal positional leakage. Experiments on HOI4D and MSRAction-3D demonstrate consistent improvements over prior methods across multiple tasks.
One limitation of DiMP is backbone efficiency. Our implementation relies on
P4Transformer, whose quadratic self-attention complexity results in lower online
throughput than linear-complexity architectures.
As an initial exploration of distributional motion modeling for dynamic point
cloud pretraining, we expect DiMP to serve as a reference baseline and to inspire
further research on distributional representation learning for dynamic point clouds.

\newpage
{\small
\bibliographystyle{unsrtnat}
\bibliography{references}
}

\newpage
\appendix
\section*{Appendix table of contents}
\begingroup
\hypersetup{linkcolor=black}
\startcontents[appendix]
\printcontents[appendix]{l}{1}{\setcounter{tocdepth}{2}}
\endgroup
\newpage

\section{Preliminaries}
\label{app:prelim}

\paragraph{Dynamic point cloud and point tube construction.}
Let $\mathcal{P} = \{\mathbf{P}^1, \mathbf{P}^2, \ldots, \mathbf{P}^L\} \in
\mathbb{R}^{L \times N \times 3}$ denote a dynamic point cloud of $L$ frames,
where each frame $\mathbf{P}^l$ contains $N$ points with $(x,y,z)$ coordinates.
Following MaST-Pre~\cite{shen2023masked}, we segment $\mathcal{P}$ into $K$
spatio-temporal point tubes.
Specifically, $K$ keypoints $\{\hat{\mathbf{p}}_i\}_{i=1}^{K}$ are sampled via
Farthest Point Sampling (FPS) applied to the first frame, where the $i$-th
tube collects all points within spatial radius $r$ and temporal distance
$l/2$ of keypoint $\hat{\mathbf{p}}_i$:
\begin{equation}
  \mathrm{Tube}_{\hat{\mathbf{p}}_i}
  = \bigl\{\mathbf{p} \mid \mathbf{p} \in \mathcal{P},\;
           D_s(\mathbf{p}, \hat{\mathbf{p}}_i) < r,\;
           D_t(\mathbf{p}, \hat{\mathbf{p}}_i) < \tfrac{l}{2}\bigr\}.
  \label{eq:tube-app}
\end{equation}
Each tube is centered at its keypoint coordinate $\hat{\mathbf{p}}_i$,
referred to as the tube center.
After random masking with ratio $\rho$, the tubes are partitioned into a
visible set $\mathcal{V}$ of size $K_v = \lfloor K(1-\rho) \rfloor$ and a
masked set $\mathcal{M}$ of size $K_m = K - K_v$, with centers
$\mathbf{C}^v \in \mathbb{R}^{K_v \times 3}$ and
$\mathbf{C}^m \in \mathbb{R}^{K_m \times 3}$.

\paragraph{Denoising diffusion probabilistic models.}
DDPMs~\cite{ho2020denoising} define a forward Markov chain that progressively
corrupts a clean signal $\mathbf{x}_0$ with Gaussian noise over $T$ steps:
\begin{equation}
  q(\mathbf{x}_t \mid \mathbf{x}_0)
  = \mathcal{N}\!\left(\mathbf{x}_t;\, \sqrt{\bar{\alpha}_t}\,\mathbf{x}_0,\,
    (1 - \bar{\alpha}_t)\mathbf{I}\right),
  \label{eq:ddpm-forward-app}
\end{equation}
where $\bar{\alpha}_t = \prod_{s=1}^{t}(1 - \beta_s)$ and
$\{\beta_t\}_{t=1}^{T}$ is a pre-defined noise schedule.
A neural network $\epsilon_\theta$ is trained to predict the added noise via
\begin{equation}
  \mathcal{L}_{\mathrm{DDPM}}
  = \mathbb{E}_{t,\,\mathbf{x}_0,\,\boldsymbol{\epsilon}}\!
    \left[\bigl\|
      \boldsymbol{\epsilon} -
      \epsilon_\theta(\mathbf{x}_t,\, t,\, \mathbf{c})
    \bigr\|^2\right],
  \quad \boldsymbol{\epsilon} \sim \mathcal{N}(\mathbf{0}, \mathbf{I}),
  \label{eq:ddpm-loss-app}
\end{equation}
where $\mathbf{c}$ is an optional conditioning signal.
Both diffusion modules in DiMP instantiate this framework with
task-specific targets and conditioning.

\section{Experimental details}
\label{app:exp-details}

\paragraph{Datasets.}
We evaluate DiMP on five benchmark datasets spanning indoor and outdoor 4D
point cloud understanding.
\textbf{HOI4D}~\cite{liu2022hoi4d} is an egocentric dataset comprising
3{,}863 dynamic point cloud collected from 9 participants interacting with
800 distinct object instances across 610 indoor environments, with frame-level
action labels across 19 action classes for action segmentation and point-level
labels across 39 semantic categories for semantic segmentation.
We follow the official split of 2{,}971 training scenes and 892 test scenes.
\textbf{MSRAction-3D}~\cite{li2010action} consists of 567 human point cloud
videos spanning 20 action categories, with 270 videos used for training and
297 for testing.
\textbf{SHREC'17}~\cite{desmedt2017shrec} is a gesture recognition benchmark
comprising 2{,}800 videos across 28 gesture classes, split into 1{,}960
training and 840 test videos.
\textbf{NvGesture}~\cite{molchanov2016online} consists of 1{,}532 videos
covering 25 gesture classes, with 1{,}050 training and 482 test videos.

\paragraph{Common pre-training setup.}
Following MaST-Pre~\cite{shen2023masked} and
M2PSC~\cite{han2024masked}, we employ
P4Transformer~\cite{fan2021point} as the backbone encoder.
During pre-training on HOI4D, 24 frames are densely sampled and 1{,}024
points are selected per frame with a temporal downsampling rate of 2 and a
temporal kernel size $l = 3$ per point tube.
The spatial downsampling rate is set to 32, the spatial neighborhood radius is
$r = 0.1$, and 32 neighbor points are sampled within each spherical query.
The masking ratio is set to 60\%~(Table~\ref{tab:ablation-masking}).
DiMP is pre-trained for 200 epochs using AdamW with an initial learning rate
of $1{\times}10^{-3}$ and cosine decay scheduling, with linear warmup for the
first 10 epochs and a batch size of 128.
All pre-training and fine-tuning experiments are conducted on 8 NVIDIA A100
GPUs.
After pre-training, the decoder and both diffusion heads are discarded and only
the encoder is retained and fine-tuned for each downstream task.

\paragraph{Motion target construction.}
Inter-frame displacement targets $\mathbf{M}_0$ are pre-computed offline using the
frozen CorrNet3D~\cite{zeng2021corrnet3d} correspondence model before training begins.
CorrNet3D is invoked once per dataset split to generate displacement files, which are
stored on disk and loaded as static supervision labels during pretraining, in the same
way that HOI4D action labels are loaded.
No gradient flows through CorrNet3D at any point, and the correspondence files are
discarded at inference time along with all other pretraining-only modules.

\paragraph{HOI4D action segmentation.}
For offline action segmentation, each sequence contains 150 frames and each
frame contains 2{,}048 points.
The model predicts an action label for every frame.

\paragraph{HOI4D semantic segmentation.}
For semantic segmentation, the full HOI4D dataset is used, where each sequence
contains 300 frames of point clouds with 8{,}192 points per frame annotated
across 39 semantic categories.
During pre-training, the sequence length is set to 10 with 4{,}096 points per
frame.
Fine-tuning and testing are performed on sequences of length 3, consistent with
prior methods.

\paragraph{MSRAction-3D action recognition.}
Following established convention~\cite{shen2023masked,han2024masked}, 270
videos are used for training and 297 for testing.
During fine-tuning, 24 frames are densely sampled and 2{,}048 points are
selected per frame.
AdamW with cosine decay is used for optimization.

\paragraph{HOI4D online inference.}
For online action segmentation, we follow the protocol of
PointNet4D~\cite{liu2026pointnet4d}.
The model processes the dynamic point cloud sequentially and is permitted to
access only the current and past frames at inference time.
To ensure causal consistency, P4Transformer is equipped with a causal attention
mask during online inference.
All methods operate on sequences of 150 frames.
We report both task performance metrics (Acc, Edit, F1@10, F1@25, F1@50) and
computational efficiency (Clips/s).

\paragraph{Gesture recognition fine-tuning (NvGesture and SHREC'17).}
For the transfer-to-gesture-recognition experiments in Appendix~\ref{app:hoi4d-gesture},
all encoders are pre-trained on HOI4D following the common pre-training setup described above,
and then fine-tuned on NvGesture and SHREC'17 under an end-to-end fine-tuning setting.
During fine-tuning, 24 frames are densely sampled and 1{,}024 points are selected per frame,
consistent with the HOI4D pre-training configuration.
The AdamW optimizer is used with a cosine decay strategy.
Following MaST-Pre~\cite{shen2023masked} and M2PSC~\cite{han2024masked},
the initial learning rate is set to $1{\times}10^{-3}$ for NvGesture
and $5{\times}10^{-4}$ for SHREC'17, with a batch size of 24.
We report recognition accuracy (\%) at both 30 and 50 fine-tuning epochs.
The reproduced MaST-Pre$^\dagger$ and M2PSC$^\dagger$ baselines are fine-tuned
under identical settings to ensure a fair comparison.

\section{Additional ablation studies}
\label{app:more-ablation}

\paragraph{Hyperparameter sensitivity.}
\label{app:hyper-sensitivity}
\label{app:masking-ratio}
Tables~\ref{tab:ablation-intervals},~\ref{tab:ablation-weight-cen}, and~\ref{tab:ablation-weight-mot}
report sensitivity to the number of stratified motion intervals $h$, the center-loss weight
$\gamma_{\mathrm{cen}}$, and the motion-loss weight $\lambda_{\mathrm{mot}}$.
Performance peaks at $h{=}4$, where more intervals enrich hierarchical supervision,
while $h{=}8$ slightly hurts due to narrower per-interval ranges.
DiMP is moderately sensitive to $\gamma_{\mathrm{cen}}$ and $\lambda_{\mathrm{mot}}$,
with optima at $0.1$ and $1.0$ respectively, and degrades smoothly elsewhere.
Table~\ref{tab:ablation-masking} reports downstream performance as the tube masking
ratio $\rho$ is varied from $50\%$ to $80\%$.
DiMP adopts $\rho{=}60\%$ by default, at which performance peaks.
Reducing $\rho$ below $60\%$ leaves too few masked tubes to provide sufficient
pretraining difficulty, while increasing $\rho$ beyond $60\%$ gradually degrades
performance as the decoder has less visible context for spatio-temporal inference.
Crucially, even at $\rho{=}75\%$, the ratio adopted by MaST-Pre~\cite{shen2023masked}
and M2PSC~\cite{han2024masked} for a directly comparable setup, DiMP achieves
$79.45$ F1@50 and $46.5$ mIoU, exceeding the best prior method
(M2PSC, $78.4$ F1@50 and $42.3$ mIoU) by clear margins of $+1.05$ and $+4.2$
respectively.
This confirms that DiMP's improvements are not an artifact of a lower masking
rate but arise from its distributional motion modeling and leakage-free
center diffusion design.

\begin{table*}[t]
  \centering
  \small
  \begin{minipage}[t]{0.32\linewidth}
    \centering
    \captionof{table}{Sensitivity to the number of stratified intervals $h$.}
    \label{tab:ablation-intervals}
    \setlength{\tabcolsep}{5pt}
    \begin{tabularx}{\linewidth}{Xcc}
      \toprule
      $h$ & F1@50 $\uparrow$ & mIoU $\uparrow$ \\
      \midrule
      1        & 79.94           & 46.1 \\
      2        & 80.13           & 46.8 \\
      \rowcolor{lightblue}
      4 (DiMP) & \textbf{80.77} & \textbf{47.6} \\
      8        & 80.73           & 46.9 \\
      \bottomrule
    \end{tabularx}
  \end{minipage}\hfill
  \begin{minipage}[t]{0.32\linewidth}
    \centering
    \captionof{table}{Sensitivity to $\gamma_{\mathrm{cen}}$.
    $\lambda_{\mathrm{mot}}$ fixed at $1.0$.}
    \label{tab:ablation-weight-cen}
    \setlength{\tabcolsep}{4pt}
    \begin{tabularx}{\linewidth}{Xc}
      \toprule
      $\gamma_{\mathrm{cen}}$ & F1@50 $\uparrow$ \\
      \midrule
      0.05            & 79.11          \\
      \rowcolor{lightblue}
      0.1             & \textbf{80.77} \\
      0.5             & 80.12          \\
      1.0             & 78.97          \\
      \bottomrule
    \end{tabularx}
  \end{minipage}\hfill
  \begin{minipage}[t]{0.32\linewidth}
    \centering
    \captionof{table}{Sensitivity to $\lambda_{\mathrm{mot}}$.
    $\gamma_{\mathrm{cen}}$ fixed at $0.1$.}
    \label{tab:ablation-weight-mot}
    \setlength{\tabcolsep}{4pt}
    \begin{tabularx}{\linewidth}{Xc}
      \toprule
      $\lambda_{\mathrm{mot}}$ & F1@50 $\uparrow$ \\
      \midrule
      0.25            & 78.34          \\
      0.5             & 79.15          \\
      \rowcolor{lightblue}
      1.0             & \textbf{80.77} \\
      2.0             & 80.64          \\
      \bottomrule
    \end{tabularx}
  \end{minipage}
\end{table*}

\begin{table}[t]
  \caption{Sensitivity of DiMP to the tube masking ratio $\rho$ on HOI4D
  action segmentation and semantic segmentation.
  The row marked with $*$ uses the same masking ratio as
  MaST-Pre~\cite{shen2023masked} and M2PSC~\cite{han2024masked} for a
  controlled comparison. All other DiMP settings are fixed.}
  \label{tab:ablation-masking}
  \centering
  \footnotesize
  \setlength{\tabcolsep}{8pt}
  \begin{tabular}{lcc}
    \toprule
    Masking Ratio $\rho$ & F1@50 $\uparrow$ & mIoU $\uparrow$ \\
    \midrule
    50\%              & 79.31          & 46.2          \\
    \rowcolor{lightblue}
    60\% (DiMP)       & \textbf{80.77} & \textbf{47.6} \\
    70\%              & 79.88          & 46.8          \\
    75\%$^*$          & 79.45          & 46.5          \\
    80\%              & 78.93          & 45.9          \\
    \bottomrule
  \end{tabular}
\end{table}

\paragraph{Positional Leakage and Center Diffusion Strategy.}
Table~\ref{tab:ablation} compares three decoder positional conditioning strategies
as practiced in prior work and in DiMP:
(i)~ground-truth masked centers following the standard MAE-style injection of
MaST-Pre~\cite{shen2023masked} and M2PSC~\cite{han2024masked},
(ii)~all-center diffusion, which applies the Point-MaDi~\cite{xiaopoint} strategy to
the video setting with Gaussian noise on every tube center,
and~(iii)~DiMP's masked-only diffusion.
To isolate the effect of the center conditioning strategy, rows (ii) and (iii)
use the full DiMP architecture with the VisMask encoder, the patch diffusion decoder, and
stratified motion diffusion, whereas row~(i) follows the literature baseline
configuration. DiMP achieves the best performance on both tasks.
The gain over the MAE baseline confirms that GT-center injection constitutes
positional leakage, while the gap over all-center diffusion shows that corrupting
visible centers is harmful in the video setting, where they serve as
cross-frame temporal conditioning signals.

\begin{table}[t]
  \caption{Comparison of positional conditioning strategies for the decoder.
  Masked-only center diffusion eliminates positional leakage while
  preserving visible coordinates as clean temporal conditioning signals.}
  \label{tab:ablation}
  \centering
  \footnotesize
  \setlength{\tabcolsep}{3pt}
  \begin{tabular}{@{}lccc@{}}
    \toprule
    Center Conditioning Strategy  & Vis.\ Noise & F1@50 $\uparrow$ & mIoU $\uparrow$ \\
    \midrule
    GT centers (standard MAE)     & N/A          & 63.84              & 40.3            \\
    All-center diffusion          & $\checkmark$           & 74.16              & 39.8            \\
    \rowcolor{lightblue}
    Masked-only diffusion (DiMP)  & $\times$            & \textbf{80.77}    & \textbf{47.6}   \\
    \bottomrule
  \end{tabular}
\end{table}

\paragraph{Motion Diffusion Branch Design.}
Table~\ref{tab:ablation-motion} ablates three independent design dimensions of the
motion diffusion branch, varying each while holding all other settings at the full
DiMP configuration.

\textit{Motion supervision form.}
Removing the motion branch entirely provides a lower bound.
Replacing it with the deterministic SHOT cardinality difference of
MaST-Pre~\cite{shen2023masked} introduces hand-crafted motion supervision at no
diffusion cost.
A single uniformly sampled timestep adds probabilistic modeling but lacks balanced
noise-level coverage. The stratified variant addresses this by sampling one timestep
per equal-width interval and achieves the best performance.
The qualitative comparison in Figure.~\ref{fig:ablation-vis} confirms the same trend.

\textit{Condition signal source.}
Conditioning on the decoder output $\mathbf{Z}_{\mathrm{dec}}$ outperforms
conditioning on the encoder output $\mathbf{Z}_{\mathrm{enc}}$, where $\mathbf{Z}_{\mathrm{enc}}$ is defined as the concatenation of $\mathbf{Z}_v$ and $\mathbf{Z}_m$, as the decoder integrates both visible and masked geometric context.
Removing all conditioning degrades performance most severely, even below the no motion branch setting, because an unconditioned motion diffusion head cannot explain $\mathbf{M}_0$ from $(\mathbf{M}_t, t)$ alone, so its gradient signal back-propagated to the encoder is effectively noise that actively interferes with the reconstruction objective rather than providing useful motion supervision.

\textit{Motion head architecture.}
The PCNet reset-gate design consistently outperforms both a two-layer MLP and a GRU.

\begin{table}[h]
  \caption{Ablation of motion diffusion branch design choices on HOI4D.
  Each dimension is varied independently while all other settings follow
  the complete DiMP configuration.}
  \label{tab:ablation-motion}
  \centering
  \footnotesize
  \setlength{\tabcolsep}{4pt}
  \begin{tabular}{llcc}
    \toprule
    Dimension & Variant & F1@50 $\uparrow$ & mIoU $\uparrow$ \\
    \midrule
    \multirow{3}{*}{Motion supervision form} & No motion branch      & 77.28   & 43.5 \\
                                             & SHOT deterministic    & 73.14   & 42.1 \\
                                             & Single-step diffusion & 78.41   & 46.1 \\
    \midrule
    \multirow{2}{*}{Condition signal source} & No conditioning       & 69.51   & 37.4 \\
                                             & $\mathbf{Z}_{\mathrm{enc}}$ & 72.58 & 41.0 \\
    \midrule
    \multirow{2}{*}{Head architecture}       & Two-layer MLP         & 77.41   & 44.8 \\
                                             & GRU                   & 78.39   & 45.1 \\
    \midrule
    \rowcolor{lightblue}
    DiMP full                                & Stratified / $\mathbf{Z}_{\mathrm{dec}}$ / PCNet reset gate & \textbf{80.77} & \textbf{47.6} \\
    \bottomrule
  \end{tabular}
\end{table}

\paragraph{Stop-Gradient on the Center Predictor.}
The stop-gradient operator $\mathrm{sg}(\cdot)$ applied to $\hat{\mathbf{C}}_0^m$
before it enters the decoder (Eq.~\eqref{eq:decoder}) is designed to decouple
$\theta_{\mathrm{cen}}$ from $\mathcal{L}_{\mathrm{geo}}$ and $\mathcal{L}_{\mathrm{mot}}$,
ensuring that the Center Predictor $g_m$ is updated exclusively by the center
denoising objective $\mathcal{L}_{\mathrm{cen}}$.
Table~\ref{tab:ablation-sg} ablates the two gradient paths independently by
selectively removing the stop-gradient from each loss.

\textit{w/o sg (both).}
Removing $\mathrm{sg}(\cdot)$ entirely allows both $\mathcal{L}_{\mathrm{geo}}$ and
$\mathcal{L}_{\mathrm{mot}}$ to back-propagate through $g_m$,
yielding the largest performance drop.
The reconstruction objective drives $g_m$ to produce centers that minimize
Chamfer distance rather than noise prediction error, corrupting the positional
inference signal.
The motion objective compounds this by injecting an
additional conflicting gradient.

\textit{sg on $\mathcal{L}_{\mathrm{geo}}$ only.}
Blocking only the geometric gradient leaves $\mathcal{L}_{\mathrm{mot}}$ free to
interfere with $g_m$.
Performance recovers partially but remains below the full DiMP setting,
confirming that the motion diffusion gradient also distorts center denoising
when left unblocked.

\textit{sg on $\mathcal{L}_{\mathrm{mot}}$ only.}
Blocking only the motion gradient while permitting the geometric gradient
likewise falls short of the full design.
The drop is slightly larger than the symmetric case above, consistent with
$\mathcal{L}_{\mathrm{geo}}$ having a shorter gradient path to $g_m$ through the
decoder and thus exerting stronger interference.

\textit{DiMP (sg on both).}
Blocking both gradient paths achieves the best performance on both metrics,
confirming that objective isolation is a necessary condition for reliable
positional inference.

\begin{table}[h]
  \caption{Ablation of the stop-gradient mechanism on HOI4D.
  ``sg($\mathcal{L}_{\mathrm{geo}}$)'' and ``sg($\mathcal{L}_{\mathrm{mot}}$)''
  indicate whether the gradient of each loss is blocked from entering $g_m$.
  All other settings follow the complete DiMP configuration.}
  \label{tab:ablation-sg}
  \centering
  \footnotesize
  \setlength{\tabcolsep}{6pt}
  \begin{tabular}{lcccc}
    \toprule
    Variant
      & sg($\mathcal{L}_{\mathrm{geo}}$)
      & sg($\mathcal{L}_{\mathrm{mot}}$)
      & F1@50 $\uparrow$
      & mIoU $\uparrow$ \\
    \midrule
    w/o sg (both)
      & $\times$ & $\times$ & 77.31 & 43.8 \\
    sg on $\mathcal{L}_{\mathrm{geo}}$ only
      & $\checkmark$ & $\times$ & 79.15 & 45.9 \\
    sg on $\mathcal{L}_{\mathrm{mot}}$ only
      & $\times$ & $\checkmark$ & 78.67 & 45.3 \\
    \rowcolor{lightblue}
    DiMP (sg on both)
      & $\checkmark$ & $\checkmark$ & \textbf{80.77} & \textbf{47.6} \\
    \bottomrule
  \end{tabular}
\end{table}

\paragraph{Sensitivity to Correspondence Quality (CorrNet3D Ablation).}
\label{app:corrnet-ablation}

To assess how sensitive DiMP is to the quality of the external correspondence model
used to generate motion targets, we evaluate downstream performance under five
motion-target variants on HOI4D action segmentation (F1@50 and mIoU).
All other DiMP settings are fixed.

\textit{Correspondence mechanisms.}
(a) \textbf{CorrNet3D} (default): the pre-trained correspondence model used throughout the paper.
(b) \textbf{$k$-NN}: each point is matched to its nearest neighbor in the next frame by
Euclidean distance, with no learned model.
(c) \textbf{CorrNet3D + 5\% noise}: 5\% of correspondences are randomly permuted before
computing displacements, simulating moderate correspondence error.
(d) \textbf{CorrNet3D + 20\% noise}: 20\% of correspondences are randomly permuted,
simulating severe correspondence degradation.

\begin{table}[h]
  \caption{Sensitivity of DiMP to motion-target quality on HOI4D action segmentation.
  All variants use the complete DiMP architecture and only the correspondence mechanism
  for building $\mathbf{M}_0$ changes. 
  $\Delta$ denotes the absolute F1@50 change relative to the DiMP default.}
  \label{tab:corrnet-ablation}
  \centering
  \footnotesize
  \setlength{\tabcolsep}{5pt}
  \begin{tabular}{lcccc}
    \toprule
    Motion Target & Correspondence Mechanism & F1@50 $\uparrow$ & mIoU $\uparrow$ & $\Delta$ F1@50 \\
    \midrule
    \rowcolor{lightblue}
    CorrNet3D (DiMP default)  & Learned correspondence     & \textbf{80.77} & \textbf{47.6} & 0.00 \\
    $k$-NN (no model)         & Euclidean nearest-neighbor & 79.12 & 45.7 & -1.65 \\
    CorrNet3D + 5\% noise     & + random permutation       & 80.43 & 47.3 & -0.34 \\
    CorrNet3D + 20\% noise    & + random permutation       & 79.48 & 46.3 & -1.29 \\
    No motion branch          & none                       & 77.28 & 43.5 & -3.49 \\
    \bottomrule
  \end{tabular}
\end{table}

The key comparison is between $k$-NN and CorrNet3D. If the zero-training-cost
$k$-NN mechanism still enables DiMP to substantially outperform the no-motion-branch
baseline, it establishes that the gains arise from the diffusion-based pretext
objective itself rather than from representational capacity imported from CorrNet3D.
The noise-injection variants further quantify the graceful degradation curve,
showing at what correspondence error rate the gains are materially diminished.

\section{Additional experimental results}
\label{app:more-results}

\subsection{Transfer to gesture recognition (HOI4D pretraining)}
\label{app:hoi4d-gesture}

We report an \emph{unofficial} supplementary evaluation in which all encoders are instead
pre-trained on HOI4D and subsequently fine-tuned on NvGesture and SHREC'17 for gesture recognition.
To our knowledge, no prior work has reported results under this protocol, as the
official codebases of existing methods do not provide pretraining configurations for the HOI4D
dataset.
To enable a fair comparison, we independently re-implemented the pretraining procedures of
MaST-Pre~\cite{shen2023masked} and M2PSC~\cite{han2024masked} following their respective papers
as closely as possible. Results obtained under these reproduced configurations are marked
with~$^\dagger$ in Table~\ref{tab:hoi4d-gesture}.
We report recognition accuracy (\%) after fine-tuning for 30 and 50 epochs, following the same
fine-tuning setup as Appendix~\ref{app:exp-details}.
As shown in Table~\ref{tab:hoi4d-gesture}, DiMP outperforms all reproduced
baselines at both fine-tuning budgets on both datasets.
At the 30-epoch budget, DiMP improves over the from-scratch P4Transformer
by $+1.4$ on NvGesture and $+2.3$ on SHREC'17, and over the strongest reproduced
baseline (M2PSC$^\dagger$) by $+0.4$ and $+0.3$ respectively.
At the 50-epoch budget, the gains over P4Transformer are $+1.8$ on NvGesture
and $+1.5$ on SHREC'17, while the margins over M2PSC$^\dagger$ grow to $+0.7$ and
$+0.9$, indicating that DiMP's distributional motion objective yields
representations that remain competitive and even improve relative to
deterministic baselines as fine-tuning proceeds.

\begin{table}[h]
  \caption{Gesture recognition accuracy (\%) on NvGesture (NvG) and SHREC'17 (SHR) when
  all encoders are pre-trained on HOI4D. Methods marked with~$^\dagger$ are re-implemented by us, as
  no official HOI4D pretraining scripts are publicly available for these methods.}
  \label{tab:hoi4d-gesture}
  \centering
  \small
  \begin{tabular}{lcccc}
    \toprule
    \multirow{2}{*}{Method} & \multicolumn{2}{c}{30 Epochs} & \multicolumn{2}{c}{50 Epochs} \\
    \cmidrule(lr){2-3}\cmidrule(lr){4-5}
    & NvG & SHR & NvG & SHR \\
    \midrule
    P4Transformer~\cite{fan2021point}             & 83.7 & 85.9 & 86.1 & 89.9 \\
    P4Transformer + MaST-Pre$^\dagger$~\cite{shen2023masked} & 84.4 & 87.3 & 86.7 & 90.1 \\
    P4Transformer + M2PSC$^\dagger$~\cite{han2024masked}      & 84.7 & 87.9 & 87.2 & 90.5 \\
    \midrule
    \rowcolor{lightblue}
    P4Transformer + DiMP (Ours) & \textbf{85.1} & \textbf{88.2} & \textbf{87.9} & \textbf{91.4} \\
    \bottomrule
  \end{tabular}
\end{table}

\subsection{Qualitative visualization of point cloud reconstruction}
\label{app:recon-vis}

Figure~\ref{fig:recon-vis} provides qualitative visualizations of the dynamic point cloud
reconstructed by DiMP during the pretraining stage.
Each row shows a multi-frame input sequence alongside the corresponding reconstruction produced by
the masked decoder.
The results demonstrate that DiMP recovers accurate per-frame geometry across diverse object
interactions and viewpoints, confirming that the probabilistic motion diffusion objective does not
impair the geometric reconstruction quality of the backbone while simultaneously enabling
distributional motion modeling.

\subsection{Distributional motion evidence: class-pair confusion and sample diversity}
\label{app:distributional-evidence}

This section provides two forms of indirect evidence that DiMP's motion objective
captures distributional structure beyond merely providing a stronger auxiliary signal.

\paragraph{Evidence 1: Class-pair confusion analysis.}
Proposition~\ref{prop:info-loss} predicts that distributional supervision should
provide the largest advantage over mean-regression supervision on action classes
that share similar mean trajectories but differ in higher-order distributional
statistics, which is precisely the setting where deterministic regression collapses to a
single shared mode.
To verify this, we identify representative HOI4D action-class pairs whose class-conditional
mean displacement fields are close when measured by the $\ell_2$ distance between per-class
average $\mathbf{M}_0$ vectors but whose within-class variance is high and
cross-class KL divergence is substantial.
Candidate pairs on HOI4D include \textit{open drawer} paired with \textit{close drawer}
and \textit{pick up object} paired with \textit{place object}, which often share similar mean
wrist-trajectory directions but differ in velocity profile and interaction phase.

\begin{table}[h]
  \caption{Per-class-pair accuracy improvement of DiMP vs.\ M2PSC on HOI4D action
  segmentation, compared to the dataset-level average improvement.
  Multimodal pairs are those with close class-conditional mean trajectories
  but substantially different distributional shapes.
  A larger relative gain on multimodal pairs provides evidence for genuine
  distributional modeling beyond a stronger auxiliary objective.
  $\Delta$\,Acc denotes per-pair class accuracy gain of DiMP over M2PSC.}
  \label{tab:multimodal-pairs}
  \centering
  \footnotesize
  \setlength{\tabcolsep}{5pt}
  \begin{tabular}{llcc}
    \toprule
    Pair Type & Class Pair & $\Delta$\,Acc & vs.\ Dataset Avg \\
    \midrule
    \multirow{2}{*}{Multimodal / shared mean}
      & Open / Close drawer    & +11.3 & +4.8 \\
      & Pick up / Place object & +9.8  & +3.3 \\
    \midrule
    \multirow{2}{*}{Control / distinct mean}
      & Pour / Stir & +5.9 & -0.6 \\
      & Walk / Jump & +6.2 & -0.3 \\
    \midrule
    Dataset average, all classes & --- & +6.5 & --- \\
    \bottomrule
  \end{tabular}
\end{table}

As shown in Table~\ref{tab:multimodal-pairs}, DiMP's per-pair gain on the multimodal
action pairs, namely $+11.3$ and $+9.8$, is approximately $1.6\times$ the dataset-level average
gain of $+6.5$, while the gain on control pairs with clearly distinct mean trajectories,
namely $+5.9$ and $+6.2$, stays near the average.
This asymmetry directly matches the prediction of Proposition~\ref{prop:info-loss-main}
and cannot be explained by a uniformly stronger auxiliary objective, which would
produce roughly equal relative improvements across all class pairs.
\begin{figure}[!h]
      \centering
      \includegraphics[width=\linewidth]{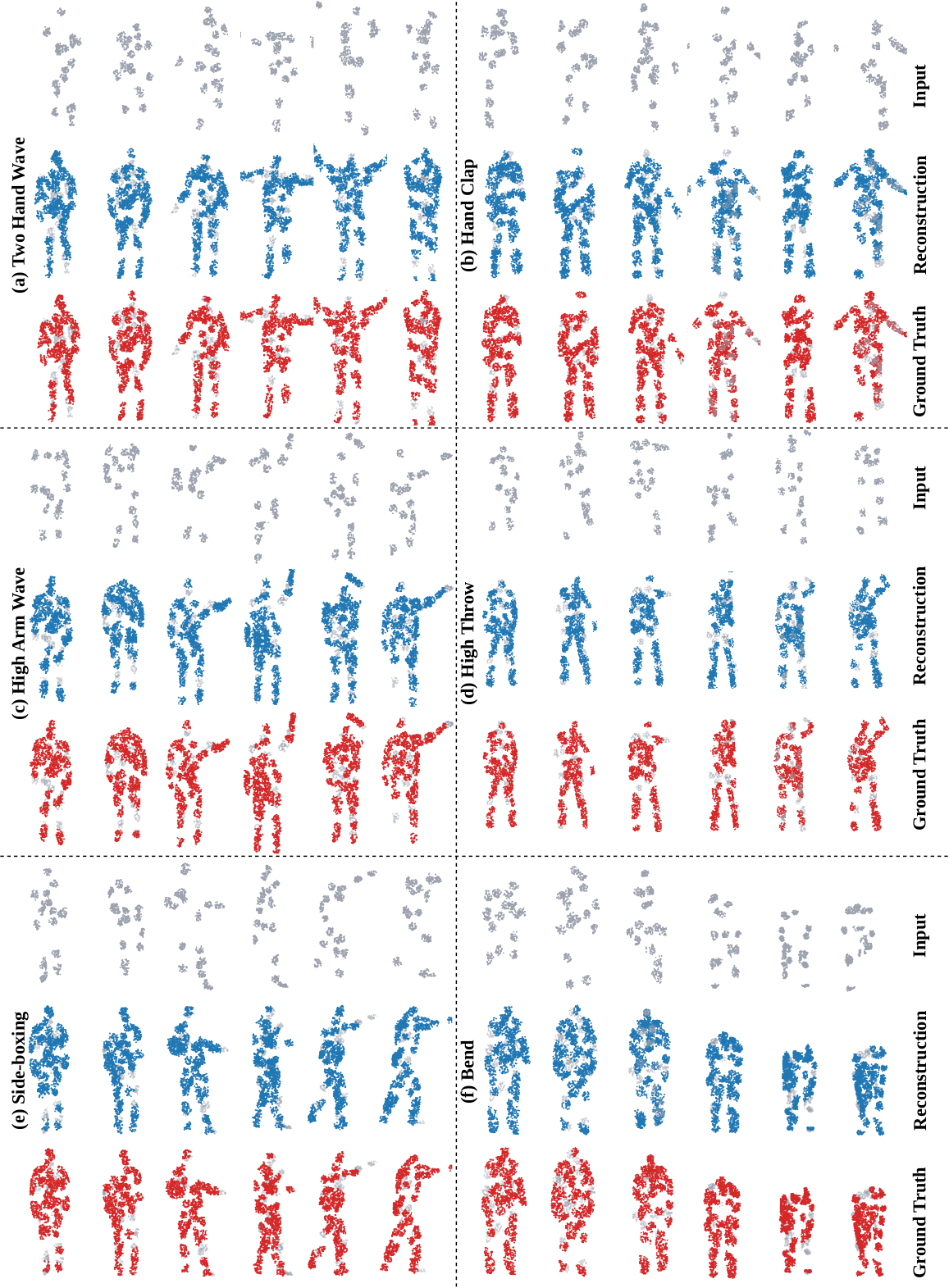}
      \caption{Qualitative visualization of point cloud reconstruction on HOI4D.
      Each row shows a ground-truth dynamic point cloud (top) and the corresponding
      reconstruction produced by DiMP's masked decoder (bottom).
      The results illustrate that DiMP faithfully recovers fine-grained per-frame geometry
      across diverse hand-object interaction scenarios, despite the high masking ratio (60\%)
      applied during pretraining.}
      \label{fig:recon-vis}
    \end{figure}
\paragraph{Evidence 2: Conditional sample diversity.}
A genuine distributional model should produce diverse yet plausible samples from
$p_\theta(\mathbf{M} \mid \mathbf{Z}_\mathrm{dec})$ given the same conditioning context.
We evaluate this by running the trained motion diffusion head in reverse-sampling mode.
Given a fixed $\mathbf{Z}_\mathrm{dec}$ from a held-out scene, we draw $K=10$
independent samples and measure their mean pairwise $\ell_2$ displacement distance.

Table~\ref{tab:diversity-quant} reports sample diversity alongside the ground-truth
within-class variance for four representative HOI4D classes.
DiMP's sample diversity covers approximately 65--69\% of the ground-truth distributional
spread across all four classes, indicating that the learned distribution is
non-degenerate and spans a meaningful region of motion space.
By contrast, a deterministic baseline produces near-identical outputs regardless of
the noise seed, with diversity close to zero and a ratio below 0.04.
The gap between the two confirms that DiMP's improvements stem from distributional
motion modeling rather than from a collapse to a single fixed estimate.

\begin{table}[h]
  \caption{Conditional sample diversity of DiMP on HOI4D.
  Ten samples are drawn from $p_\theta(\mathbf{M} \mid \mathbf{Z}_\mathrm{dec})$
  with the same fixed conditioning for each class.
  Sample diversity is the mean pairwise $\ell_2$ displacement distance across samples.
  GT variance is the empirical within-class variance from the test split.
  Ratio reports sample diversity divided by GT variance.}
  \label{tab:diversity-quant}
  \centering
  \footnotesize
  \setlength{\tabcolsep}{5pt}
  \begin{tabular}{lccc}
    \toprule
    Action Class & Sample Diversity & GT Variance & Ratio \\
    \midrule
    Open drawer  & 0.031 & 0.048 & 0.65 \\
    Close drawer & 0.034 & 0.051 & 0.67 \\
    Pick up      & 0.043 & 0.062 & 0.69 \\
    Place object & 0.039 & 0.057 & 0.68 \\
    \midrule
    Deterministic baseline & 0.002 & --- & \textless{}0.04 \\
    \bottomrule
  \end{tabular}
\end{table}

\subsection{Variance reporting and multi-seed statistics}
\label{app:variance}

To quantify result stability, DiMP and M2PSC are each evaluated over 3 independent
random seeds, with pretraining seed and fine-tuning seed held together per run.
Table~\ref{tab:multiseed} reports mean\,$\pm$\,std for the two action segmentation
benchmarks.

Across all tasks, DiMP's standard deviation stays below 0.7 points, indicating
stable convergence across seeds.
Crucially, every absolute improvement of DiMP over M2PSC exceeds the combined
standard deviations by a factor of at least $4\times$, confirming that the reported
gains are not attributable to random seed variance.

\begin{table}[h]
  \caption{Multi-seed statistics (mean $\pm$ std, 3 seeds) for DiMP and M2PSC on
  HOI4D offline and online action segmentation.
  Seed indices match across both methods for fair comparison.}
  \label{tab:multiseed}
  \centering
  \footnotesize
  \setlength{\tabcolsep}{4pt}
  \begin{tabular}{llccccc}
    \toprule
    Task & Method & Acc & Edit & F1@10 & F1@25 & F1@50 \\
    \midrule
    \multirow{2}{*}{HOI4D offline seg}
      & M2PSC & 75.9{\scriptsize\,±\,0.4} & 77.1{\scriptsize\,±\,0.5} & 78.4{\scriptsize\,±\,0.4} & 74.1{\scriptsize\,±\,0.5} & 65.9{\scriptsize\,±\,0.5} \\
      & DiMP  & 82.4{\scriptsize\,±\,0.5} & 84.9{\scriptsize\,±\,0.6} & 87.3{\scriptsize\,±\,0.5} & 85.9{\scriptsize\,±\,0.6} & 80.8{\scriptsize\,±\,0.5} \\
    \midrule
    \multirow{2}{*}{HOI4D online seg}
      & M2PSC & 73.2{\scriptsize\,±\,0.5} & 74.8{\scriptsize\,±\,0.6} & 76.1{\scriptsize\,±\,0.5} & 72.3{\scriptsize\,±\,0.6} & 63.1{\scriptsize\,±\,0.6} \\
      & DiMP  & 80.4{\scriptsize\,±\,0.6} & 82.2{\scriptsize\,±\,0.7} & 84.8{\scriptsize\,±\,0.6} & 83.2{\scriptsize\,±\,0.7} & 73.5{\scriptsize\,±\,0.6} \\
    \bottomrule
  \end{tabular}
\end{table}

\section{Positional leakage as a prerequisite barrier to distributional motion modeling}
\label{app:leakage-prerequisite}

\paragraph{Overview.}
The motion diffusion branch of DiMP is conditioned on the decoded representation
$\mathbf{Z}_{\mathrm{dec}}$ produced by the masked decoder.
For this branch to capture the full multimodal distribution of inter-frame motion
trajectories, $\mathbf{Z}_{\mathrm{dec}}$ must encode rich, globally coherent motion
context rather than merely local geometric features.
We show here, both analytically and empirically, that positional leakage,
which injects ground-truth masked tube centers $\mathbf{c}^*$ as decoder positional
embeddings, collapses decoder cross-attention into local geometric retrieval,
severing the informational pathway through which the motion diffusion loss drives
the encoder to internalize motion distributions.
Consequently, eliminating positional leakage is a \emph{necessary precondition}
for the motion diffusion branch to function as a genuine distributional learner.

\paragraph{Notation.}
Let $\mathbf{Z}_{\mathrm{enc}} = f_{\mathrm{enc}}(\mathbf{X})$ denote the encoder output
from visible tokens $\mathbf{X}$.
The masked decoder produces, for each masked tube $k$,
\begin{equation}
  \mathbf{Z}_{\mathrm{dec},k}
  = \mathrm{CrossAttn}\!\left(W_Q\bigl(\mathbf{m}_k + \mathrm{PE}(\mathbf{p}_k)\bigr),\;
    W_K \mathbf{Z}_{\mathrm{enc}},\; W_V \mathbf{Z}_{\mathrm{enc}}\right),
  \label{eq:dec-crossattn}
\end{equation}
where $\mathbf{m}_k$ is a learnable masked token embedding and $\mathbf{p}_k$ is the
positional input supplied to the decoder query.
Under \textbf{positional leakage}, $\mathbf{p}_k = \mathbf{c}^*_k$ is the ground-truth center.
Under \textbf{DiMP's center diffusion}, $\mathbf{p}_k = \hat{\mathbf{c}}_k$ is the predicted center
with residual uncertainty.
Let $\mathbf{M}$ denote the inter-frame displacement field and $A$ the action class label.

\paragraph{Assumption 1 (Strong positional dominance).}
\emph{We consider the limiting regime in which
(a) the positional embedding norm dominates the learnable mask token, \emph{i.e.},
$\|W_Q \mathrm{PE}(\mathbf{c}^*_k)\| \gg \|W_Q \mathbf{m}_k\|$, and
(b) the encoder has already learned a position-aligned key mapping, in the
sense that the similarity gap
\begin{equation}
\gamma_k := \max_{j'}\,\langle W_Q \mathrm{PE}(\mathbf{c}^*_k),\,
            W_K \mathbf{Z}_{\mathrm{enc},j'}\rangle
          - \mathrm{second}_{j'}\!\langle W_Q \mathrm{PE}(\mathbf{c}^*_k),\,
            W_K \mathbf{Z}_{\mathrm{enc},j'}\rangle
\end{equation}
is strictly positive and grows with $\|\mathrm{PE}(\mathbf{c}^*_k)\|$.}

Assumption~1(a) is a standard consequence of injecting ground-truth centers
into a trained Transformer, where positional features routinely exceed
content-token norms by an order of magnitude~\cite{he2022masked}.
Assumption~1(b) is empirically observable in any decoder trained with
positional supervision and is the source of the positional shortcut behavior
previously documented for masked
autoencoders~\cite{he2022masked,pang2022masked,zhang2022pointm2ae}.
Both conditions are \emph{regime assumptions}: they describe the limit in which
leakage is severe.
The following property is stated as a limit-case result under this regime, not
as a statement about the fully trained network at finite temperature.

\paragraph{Property 1 (Attention concentration in the leakage regime).}
\emph{Under Assumption~1 with $\tau = 1$ held fixed, the softmax attention
weight $\alpha_{kj}$ in Eq.~\eqref{eq:dec-crossattn} concentrates around the
single encoder token
$j^*_k := \arg\max_{j'}\,\langle W_Q \mathrm{PE}(\mathbf{c}^*_k),\,
W_K \mathbf{Z}_{\mathrm{enc},j'}\rangle$:}
\begin{equation}
  1 - \alpha_{k,j^*_k} \;\leq\; (T_{\mathrm{vis}}-1)\,e^{-\gamma_k},
  \qquad
  \alpha_{k,j^*_k} \;\to\; 1 \;\;\text{as}\;\; \gamma_k \to \infty.
  \label{eq:attn-collapse}
\end{equation}

\textit{Proof.}
Write the attention logits as $e_{kj} = \langle W_Q(\mathbf{m}_k +
\mathrm{PE}(\mathbf{c}^*_k)),\, W_K \mathbf{Z}_{\mathrm{enc},j}\rangle =
\tilde{e}_{kj} + \delta_{kj}$, where $\tilde{e}_{kj} = \langle W_Q
\mathrm{PE}(\mathbf{c}^*_k),\, W_K \mathbf{Z}_{\mathrm{enc},j}\rangle$ and
$|\delta_{kj}| \leq \|W_Q \mathbf{m}_k\|\,\|W_K \mathbf{Z}_{\mathrm{enc},j}\|$
is negligible under Assumption~1(a).
Under Assumption~1(b), $\tilde{e}_{k,j^*_k}$ exceeds every other
$\tilde{e}_{kj'}$ by at least $\gamma_k$, and the standard softmax tail bound
then yields $1 - \alpha_{k,j^*_k} \leq (T_{\mathrm{vis}}-1)\,e^{-\gamma_k}$.
\hfill$\square$

Property~1 recovers the one-hot collapse in the limit $\gamma_k \to \infty$
but is a finite-$\gamma_k$ concentration bound, which is the meaningful regime for
trained networks.
We next use it to argue that the decoder representation under leakage
effectively depends only on $\mathbf{Z}_{\mathrm{enc},j^*_k}$ up to a vanishing
residual.

Under Eq.~\eqref{eq:attn-collapse}, $\mathbf{Z}_{\mathrm{dec},k}^{\mathrm{leak}}
= W_V \mathbf{Z}_{\mathrm{enc},j^*_k} + O(e^{-\gamma_k})$ is a
\emph{near-single-token appearance summary} that carries only the local
geometric content of one encoder token up to an exponentially small residual
in $\gamma_k$, rather than the global spatio-temporal context required to model motion
distributions.

\paragraph{Corollary 1 (Motion information bottleneck in the strong leakage limit).}
\emph{In the limit $\gamma_k \to \infty$ of Assumption~1, the mutual information
between the inter-frame displacement $\mathbf{M}$ and the leaked decoder
representation is bounded as}
\begin{equation}
  I(\mathbf{M};\, \mathbf{Z}_{\mathrm{dec}}^{\mathrm{leak}})
  \;\leq\;
  I(\mathbf{M};\, \mathbf{Z}_{\mathrm{enc},j^*})
  \;\leq\;
  I(\mathbf{M};\, \mathbf{Z}_{\mathrm{enc}}).
  \label{eq:info-bottleneck}
\end{equation}
\emph{The first inequality is strict whenever $\mathbf{M}$ depends on encoder
tokens other than $j^*_k$, a condition that holds generically for any motion
involving multiple body parts or objects.}

\textit{Proof.}
In the limit $\gamma_k \to \infty$, the residual in the previous display
vanishes and $\mathbf{Z}_{\mathrm{dec},k}^{\mathrm{leak}}$ becomes a
deterministic function of $\mathbf{Z}_{\mathrm{enc},j^*_k}$ alone, so the
first inequality follows from the data processing inequality.
The second inequality holds because $\mathbf{Z}_{\mathrm{enc},j^*_k}$ is a
component of the full encoder representation $\mathbf{Z}_{\mathrm{enc}}$,
giving $I(\mathbf{M};\, \mathbf{Z}_{\mathrm{enc},j^*_k}) \leq
I(\mathbf{M};\, \mathbf{Z}_{\mathrm{enc}})$ by the chain rule and the
non-negativity of conditional mutual information.\hfill$\square$

\paragraph{Gradient flow analysis.}
During training, the motion diffusion loss gradient flows back to the encoder via the
decoder cross-attention:
\begin{equation}
  \frac{\partial \mathcal{L}_{\mathrm{mot}}}{\partial \mathbf{Z}_{\mathrm{enc}}}
  = \sum_k \frac{\partial \mathcal{L}_{\mathrm{mot}}}{\partial \mathbf{Z}_{\mathrm{dec},k}}
    \cdot \frac{\partial \mathbf{Z}_{\mathrm{dec},k}}{\partial \mathbf{Z}_{\mathrm{enc}}}.
\end{equation}
Under the attention concentration of Property~1, the Jacobian
$\partial \mathbf{Z}_{\mathrm{dec},k}/\partial \mathbf{Z}_{\mathrm{enc}}$
is dominated by its component at token $j^*_k$, with contributions from other
tokens suppressed by $O(e^{-\gamma_k})$: the gradient effectively reaches only
one encoder token per masked query, updating at most $K$ out of
$T_{\mathrm{vis}}$ visible tokens (where $T_{\mathrm{vis}} \gg K$ in practice).
This \emph{effective gradient sparsity} means the encoder learns to encode
only local retrieval-relevant features at specific spatial positions, not the
globally consistent motion-distribution context that the diffusion head
requires.
When positional leakage is removed, attention is distributed across all encoder
tokens (since $\hat{\mathbf{c}}_k$ carries residual uncertainty), the Jacobian is
dense, and the motion diffusion loss provides globally informative gradient signals
that drive the encoder to internalize the full motion distribution.

\begin{proposition}[Leakage caps action-discriminative information]
\label{prop:leakage-cap-discriminability}
Let $\mathbf{Z}_{\mathrm{enc}}^{\mathrm{leak},*}$ and
$\mathbf{Z}_{\mathrm{enc}}^{\mathrm{DiMP},*}$ be the encoder representations at the
population-optimum of the motion diffusion losses $\mathcal{L}_{\mathrm{mot}}^{\mathrm{leak}}$
and $\mathcal{L}_{\mathrm{mot}}^{\mathrm{DiMP}}$, respectively.
Assume:
\begin{enumerate}[label=(A\arabic*), leftmargin=2.2em, topsep=2pt, itemsep=1pt]
  \item \textbf{Attention concentration under leakage}: the leakage branch
    satisfies Assumption~1 in the limit $\gamma_k \to \infty$, so Corollary~1
    applies and
    $I(\mathbf{M};\mathbf{Z}_{\mathrm{dec}}^{\mathrm{leak}}) \leq
    I(\mathbf{M};\mathbf{Z}_{\mathrm{enc},j^*})$.
  \item \textbf{Markov structure}: the data process follows
    $A \to \mathbf{M} \to \mathbf{Z}_{\mathrm{enc}} \to \mathbf{Z}_{\mathrm{dec}}$,
    so the DPI gives $I(A;\mathbf{Z}_{\mathrm{enc}}) \leq I(A;\mathbf{M})$.
  \item \textbf{Global dependence of $A$ on $\mathbf{M}$}: the action label
    depends on displacement information from more than one spatial location,
    \emph{i.e.}, $I(A;\mathbf{M}) > I(A;\mathbf{M}_{j^*})$ where $\mathbf{M}_{j^*}$
    restricts $\mathbf{M}$ to the neighborhood of the bottleneck token
    $j^*$.
  \item \textbf{Variational tightness gap}: the DDPM loss is a variational
    upper bound on $-\mathbb{E}[\log p_{\theta}(\mathbf{M}\mid\mathbf{Z}_{\mathrm{dec}})]$,
    and the population optimum of the encoder under
    $\mathcal{L}_{\mathrm{mot}}^{\mathrm{DiMP}}$ attains a strictly tighter
    bound than under $\mathcal{L}_{\mathrm{mot}}^{\mathrm{leak}}$ when
    (A1)--(A3) hold.
\end{enumerate}
Then
\begin{equation}
  I\!\left(A;\; \mathbf{Z}_{\mathrm{enc}}^{\mathrm{leak},*}\right)
  \;<\;
  I\!\left(A;\; \mathbf{Z}_{\mathrm{enc}}^{\mathrm{DiMP},*}\right).
  \label{eq:joint-necessity}
\end{equation}
\end{proposition}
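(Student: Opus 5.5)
The plan is to sandwich the two quantities in \eqref{eq:joint-necessity} between an upper bound for the leakage branch and a lower bound for the DiMP branch, and then use (A3)--(A4) to separate the two bounds strictly.

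\emph{Upper bound for the leakage branch.} At the population optimum of $\mathcal{L}_{\mathrm{mot}}^{\mathrm{leak}}$, the only pressure the motion objective puts on the encoder passes through $\mathbf{Z}_{\mathrm{dec}}^{\mathrm{leak}}$. By (A1) and Corollary~1, in the limit $\gamma_k\to\infty$ this representation is a deterministic function of $\mathbf{Z}_{\mathrm{enc},j^*}$. I would argue that at the optimum the action-relevant content of $\mathbf{Z}_{\mathrm{enc}}^{\mathrm{leak},*}$ is therefore carried by the bottleneck token. Combining this with the Markov structure (A2), localized to the neighborhood of $j^*$, gives
\[
I\bigl(A;\mathbf{Z}_{\mathrm{enc}}^{\mathrm{leak},*}\bigr)\le I\bigl(A;\mathbf{M}_{j^*}\bigr).
\]
This is the step I expect to be the main obstacle. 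The DPI bounds information in $\mathbf{Z}_{\mathrm{dec}}$, not in $\mathbf{Z}_{\mathrm{enc}}$. Encoder tokens other than $j^*$ may still carry action information that is supplied by $\mathcal{L}_{\mathrm{geo}}$ or that survives only incidentally. To close this gap, I would first try to interpret ``population optimum of $\mathcal{L}_{\mathrm{mot}}$'' as the minimal sufficient representation for that objective. Under that reading, any information not used by the loss is discarded, which turns the decoder bound into an encoder bound. I would state this interpretation explicitly as part of reading the proposition.

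\emph{Lower bound for the DiMP branch.} Next I would use (A4). A strictly tighter variational bound on $-\mathbb{E}[\log p_\theta(\mathbf{M}\mid\mathbf{Z}_{\mathrm{dec}})]$ means a strictly smaller conditional entropy estimate. That in turn gives $I(\mathbf{M};\mathbf{Z}_{\mathrm{dec}}^{\mathrm{DiMP}})$ strictly larger than the leakage value, and hence larger than $I(\mathbf{M};\mathbf{Z}_{\mathrm{enc},j^*})$, because $H(\mathbf{M})$ is the same for both branches. I would convert this into action information by decomposing $\mathbf{M}=(\mathbf{M}_{j^*},\mathbf{M}_{\neg j^*})$. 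The surplus motion information must then come from $\mathbf{M}_{\neg j^*}$. By (A3), $A$ has a strictly positive conditional dependence $I(A;\mathbf{M}_{\neg j^*}\mid\mathbf{M}_{j^*})>0$. So if the DiMP representation captures this additional motion component, it captures some of the extra action information, giving
\[
I\bigl(A;\mathbf{Z}_{\mathrm{enc}}^{\mathrm{DiMP},*}\bigr)> I\bigl(A;\mathbf{M}_{j^*}\bigr).
\]

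\emph{A second delicate point.} Extra motion information does not automatically imply extra action information, because the surplus could be action-irrelevant. I would handle this by reading (A4) as stating that the tightness gain occurs exactly because (A3) holds, that is, on the action-relevant non-local component. This makes the transfer step a direct consequence of the assumption rather than a derived fact.

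Chaining the upper and lower bounds yields the strict inequality \eqref{eq:joint-necessity}.
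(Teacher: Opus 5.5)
Your sandwich uses the same ingredients as the paper's proof but arranges them differently. The paper never introduces the benchmark $I(A;\mathbf{M}_{j^*})$. It reads (A4) as saying the population loss equals $H(\mathbf{M}\mid\mathbf{Z}_{\mathrm{dec}})$ up to a constant, so each branch maximizes $I(\mathbf{M};\mathbf{Z}_{\mathrm{dec}})$ over encoders. Under leakage this is capped by $I(\mathbf{M};\mathbf{Z}_{\mathrm{enc},j^*})$, and under DiMP only by $I(\mathbf{M};\mathbf{Z}_{\mathrm{enc}})$. The paper then asserts $I(\mathbf{M};\mathbf{Z}_{\mathrm{enc}}^{\mathrm{DiMP},*})>I(\mathbf{M};\mathbf{Z}_{\mathrm{enc}}^{\mathrm{leak},*})$ ``by (A3)'' and transfers this to $A$ through the identity $I(A;\mathbf{Z})=I(A;\mathbf{M})-I(A;\mathbf{M}\mid\mathbf{Z})$, which is valid under (A2). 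Your two delicate points are exactly the steps that proof passes over without comment. The first is not cosmetic: nothing in (A1)--(A4) stops a leakage optimum from storing all of $\mathbf{M}$ in encoder tokens the decoder never attends to. In that case $I(A;\mathbf{Z}_{\mathrm{enc}}^{\mathrm{leak},*})=I(A;\mathbf{M})\ge I(A;\mathbf{Z}_{\mathrm{enc}}^{\mathrm{DiMP},*})$ and the strict inequality fails. Making a minimality convention explicit is therefore the right move. Pivoting on the chain rule $I(A;\mathbf{M})=I(A;\mathbf{M}_{j^*})+I(A;\mathbf{M}_{\neg j^*}\mid\mathbf{M}_{j^*})$ also makes the role of (A3) sharper than the paper's appeal to ``the action-discriminative part of $\mathbf{M}$''.

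The route is viable, but two steps of your chain do not go through as written.

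First, $I(A;\mathbf{Z}_{\mathrm{enc},j^*})\le I(A;\mathbf{M}_{j^*})$ does not follow from (A2) ``localized to the neighborhood of $j^*$''. (A2) only places $\mathbf{Z}_{\mathrm{enc}}$ downstream of the full field $\mathbf{M}$, and the encoder is a global self-attention stack, so $\mathbf{Z}_{\mathrm{enc},j^*}$ can aggregate displacement information from the whole scene. You need a separate locality hypothesis $\mathbf{Z}_{\mathrm{enc},j^*}\perp\mathbf{M}_{\neg j^*}\mid\mathbf{M}_{j^*}$; together with (A2) it yields $A\to\mathbf{M}_{j^*}\to\mathbf{Z}_{\mathrm{enc},j^*}$. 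List it next to your minimality reading. The paper relies on it implicitly too, but in your route it is load-bearing. Also, since Corollary~1 is per masked query, $\mathbf{M}_{j^*}$ must be read as the motion on the union of the neighborhoods of $\{j^*_k\}_k$.

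Second, in the lower bound, ``strictly larger than the leakage value, and hence larger than $I(\mathbf{M};\mathbf{Z}_{\mathrm{enc},j^*})$'' runs the wrong way. Corollary~1 only gives leakage value $\le I(\mathbf{M};\mathbf{Z}_{\mathrm{enc},j^*})$. Even granting a surplus in $\mathbf{M}_{\neg j^*}$, capturing some non-local action information yields $I(A;\mathbf{Z}_{\mathrm{enc}}^{\mathrm{DiMP},*})>I(A;\mathbf{M}_{j^*})$ only if the DiMP representation also keeps all the action information in $\mathbf{M}_{j^*}$.

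The clean repair is to drop the motion-information detour and read the uncapped DiMP optimum as sufficient for $A$, i.e.\ $I(A;\mathbf{M}\mid\mathbf{Z}_{\mathrm{dec}}^{\mathrm{DiMP}})=0$. The identity above then gives $I(A;\mathbf{Z}_{\mathrm{dec}}^{\mathrm{DiMP}})=I(A;\mathbf{M})>I(A;\mathbf{M}_{j^*})$ by (A3). The DPI along $A\to\mathbf{Z}_{\mathrm{enc}}\to\mathbf{Z}_{\mathrm{dec}}$ then lifts this to the encoder, where, unlike in your upper bound, it runs in the favorable direction.
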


\begin{proof}
By (A4), at the population optimum the loss
$\mathcal{L}_{\mathrm{mot}}$ coincides with the conditional entropy
$H(\mathbf{M}\mid\mathbf{Z}_{\mathrm{dec}})$ (up to a constant in
$\mathbf{Z}_{\mathrm{dec}}$), so minimizing it is equivalent to maximizing
$I(\mathbf{M};\mathbf{Z}_{\mathrm{dec}})$ as a functional of the encoder.
Under leakage and (A1), this objective is upper-bounded by
$I(\mathbf{M};\mathbf{Z}_{\mathrm{enc},j^*})$; under DiMP's center diffusion
this cap is removed and the functional is upper-bounded only by the larger
quantity $I(\mathbf{M};\mathbf{Z}_{\mathrm{enc}})$.
Therefore the population-optimal encoder under DiMP retains \emph{at least}
as much information about $\mathbf{M}$ as under leakage, and by (A3) retains
strictly more about the part of $\mathbf{M}$ that is action-discriminative:
$I(\mathbf{M};\mathbf{Z}_{\mathrm{enc}}^{\mathrm{DiMP},*}) >
I(\mathbf{M};\mathbf{Z}_{\mathrm{enc}}^{\mathrm{leak},*})$.
Combining with (A2) and the action-discriminability decomposition
$I(A;\mathbf{Z}) = I(A;\mathbf{M}) - I(A;\mathbf{M}\mid\mathbf{Z})$ yields
$I(A;\mathbf{Z}_{\mathrm{enc}}^{\mathrm{leak},*}) <
I(A;\mathbf{Z}_{\mathrm{enc}}^{\mathrm{DiMP},*})$.
\end{proof}

\begin{remark}
We state this as a \emph{proposition} rather than a theorem because
Assumption~(A4)---that optimizing the variational surrogate at population scale
achieves a strictly tighter bound under DiMP than under leakage---is the
standard assumption under which variational bound improvements translate to
information improvements, but is not itself proved here.
Assumptions (A1)--(A3) are verifiable: (A1) follows from Property~1 in the
leakage regime; (A2) is the modeling assumption articulated at the beginning
of Appendix~\ref{app:theory}; (A3) is a structural property of multi-joint or
multi-object actions.
The empirical validation in the main text (Section~\ref{sec:ablation},
Table~\ref{tab:ablation-prerequisite}) provides direct evidence that the overall
implication~\eqref{eq:joint-necessity} holds on real dynamic point cloud data.
\end{remark}

\paragraph{Empirical validation.}
The $2{\times}2$ factorial experiment is reported in the main text
(Table~\ref{tab:ablation-prerequisite}, Section~\ref{sec:ablation}).
It shows the same asymmetric interaction predicted here, negligible $\Delta(+\mathrm{MD})$
without center diffusion and a large gain once leakage is removed,
consistent with Corollary~1 and the gradient-sparsity discussion above.

\section{Theoretical justification: distributional uncertainty and action discriminability}
\label{app:theory}

\paragraph{Motivation.}
Existing masked-reconstruction methods for dynamic point cloud address motion
supervision through deterministic proxy targets, such as temporal cardinality
difference~\cite{shen2023masked} or point trajectory
regression~\cite{han2024masked}.
These objectives reduce supervision to fitting a conditional expectation,
training the model to predict $\mathbb{E}[\Delta\mathbf{p} \mid \text{context}]$.
In scenarios involving occlusion, ambiguous interactions, or complex dynamics,
this mean-regression regime collapses multiple plausible motion modes into a
single deterministic solution, discarding precisely the distributional structure
that distinguishes action classes from one another~\cite{gupta2018socialgan,
ivanovic2019trajectron,salzmann2020trajectronplusplus}.
DDPMs, by contrast, are capable of learning the full conditional distribution
over outcomes, an ideal inductive bias for multimodal motion trajectories.
We formalize this intuition below.

\paragraph{Setup.}
Let $A \in \mathcal{A}$ denote the action class, $\mathbf{M} \in \mathbb{R}^d$
the inter-frame displacement vector, and $\mathbf{Z}$ the encoder
representation.
We assume the generative Markov chain $A \to \mathbf{M} \to \mathbf{Z}$, so that
the data processing inequality gives $I(A;\mathbf{Z}) \leq I(A;\mathbf{M})$.
Two distinct statistics of $\mathbf{M}$ arise in the following analysis and
must be carefully distinguished:
\begin{itemize}[leftmargin=1.5em, topsep=2pt, itemsep=1pt]
  \item the \emph{class-conditional mean trajectory}
    $\bar{\mathbf{M}}(A) := \mathbb{E}[\mathbf{M} \mid A]$, a deterministic
    function of $A$. We denote this random variable by $\bar{\mathbf{M}}$.
  \item the \emph{deterministic mean-regression output}
    $\hat{\mathbf{M}}(\mathbf{Z}) := \mathbb{E}[\mathbf{M} \mid \mathbf{Z}]$, a
    deterministic function of $\mathbf{Z}$. We denote this random variable by $\hat{\mathbf{M}}$.
\end{itemize}
Both are $d$-dimensional vectors in motion space but play different roles in
the analysis below.

\begin{proposition}[Bayes-error obstruction of the class-mean statistic]
\label{prop:info-loss}
Suppose there exist distinct action classes $a_1, a_2 \in \mathcal{A}$ such
that $\bar{\mathbf{M}}(a_1) = \bar{\mathbf{M}}(a_2)$ while
$p(\mathbf{M}\mid A=a_1) \neq p(\mathbf{M}\mid A=a_2)$.
Then
\begin{enumerate}[label=(\roman*), leftmargin=1.8em, topsep=2pt, itemsep=1pt]
  \item the Bayes-optimal classifier based on $\bar{\mathbf{M}}$ cannot
    distinguish $a_1$ from $a_2$ above chance: for any realization $c^* =
    \bar{\mathbf{M}}(a_1) = \bar{\mathbf{M}}(a_2)$,
    $p(A = a_i \mid \bar{\mathbf{M}} = c^*) \propto p(A = a_i)$ for $i=1,2$.
  \item in contrast, the Bayes-optimal classifier based on $\mathbf{M}$ attains
    strictly lower posterior entropy for those realizations of $\mathbf{M}$ on
    which $p(\mathbf{M}\mid A = a_1)$ and $p(\mathbf{M}\mid A = a_2)$ differ,
    giving $H(A\mid \mathbf{M}) < H(A\mid \bar{\mathbf{M}})$ and hence the
    strict information gap
    \begin{equation}
      I(A;\bar{\mathbf{M}}) \;<\; I(A;\mathbf{M}).
      \label{eq:bar-M-gap}
    \end{equation}
\end{enumerate}
\end{proposition}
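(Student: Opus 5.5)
Part (i) is a direct computation with the class-mean map, and part (ii) compares two posteriors over $A$ via a mutual-information identity. For (i), I would use the fact that $\bar{\mathbf{M}}=\bar{\mathbf{M}}(A)$ is a deterministic function of $A$. So the event $\{\bar{\mathbf{M}}=c^*\}$ is exactly $\{A\in S\}$ with $S:=\{a\in\mathcal{A}:\bar{\mathbf{M}}(a)=c^*\}\supseteq\{a_1,a_2\}$. Bayes' rule then gives
\begin{equation*}
p(A=a_i\mid \bar{\mathbf{M}}=c^*) = \frac{p(A=a_i)}{p(A\in S)}, \qquad i=1,2,
\end{equation*}
so the posterior is proportional to the prior and no classifier based on $\bar{\mathbf{M}}$ can favour $a_1$ or $a_2$ beyond their prior odds. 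This step is routine.

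For (ii), I would first restrict attention to the collision event $S$, which is the comparison the statement intends. On $\{A\in S\}$ the statistic $\bar{\mathbf{M}}$ is constant, so $I(A;\bar{\mathbf{M}}\mid A\in S)=0$. For $\mathbf{M}$, I would write
\begin{equation*}
I(A;\mathbf{M}\mid A\in S)=\sum_{a\in S}\frac{p(a)}{p(S)}\,\mathrm{KL}\bigl(p(\mathbf{M}\mid a)\,\big\|\,p(\mathbf{M}\mid A\in S)\bigr).
\end{equation*}
Since $p(\mathbf{M}\mid a_1)\neq p(\mathbf{M}\mid a_2)$, the two class-conditionals cannot both equal the mixture $p(\mathbf{M}\mid A\in S)$, so at least one KL term is strictly positive (assuming $p(a_1),p(a_2)>0$). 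Hence $H(A\mid \mathbf{M},A\in S)<H(A\mid A\in S)=H(A\mid\bar{\mathbf{M}},A\in S)$. This is exactly the claimed lower posterior entropy on the set of realizations where the conditionals differ. In particular, when $\mathcal{A}=\{a_1,a_2\}$ (or more generally $S=\mathcal{A}$), this yields $I(A;\bar{\mathbf{M}})=0<I(A;\mathbf{M})$, i.e.\ \eqref{eq:bar-M-gap}.

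The main obstacle is lifting this to the unconditional inequality \eqref{eq:bar-M-gap} over all of $\mathcal{A}$, and as stated it can fail. Since $\bar{\mathbf{M}}$ is a function of $A$, we have $I(A;\bar{\mathbf{M}})=H(\bar{\mathbf{M}})$, which may exceed $I(A;\mathbf{M})$. For example, take two other classes $b_1,b_2$ with distinct means whose displacement distributions overlap heavily: then $\bar{\mathbf{M}}$ separates them perfectly while $\mathbf{M}$ does not. The unconditional comparison therefore needs an extra hypothesis. I would try two options, in this order:
\begin{enumerate}[label=(\alph*), leftmargin=1.8em, topsep=2pt, itemsep=1pt]
  \item assume $\bar{\mathbf{M}}(\cdot)$ is \emph{recoverable} from $\mathbf{M}$, meaning $\bar{\mathbf{M}}(A)=g(\mathbf{M})$ almost surely, equivalently $H(\bar{\mathbf{M}}\mid\mathbf{M})=0$;
  \item replace $\bar{\mathbf{M}}$ by the pair statistic $(\bar{\mathbf{M}},\mathbf{M})$.
\end{enumerate}
Under (a) I would use the chain rule with $H(\bar{\mathbf{M}}\mid A)=0$:
\begin{equation*}
I(A;\mathbf{M})=I(A;\bar{\mathbf{M}})+I(A;\mathbf{M}\mid\bar{\mathbf{M}}),
\end{equation*}
which holds because $I(A;\mathbf{M},\bar{\mathbf{M}})=I(A;\mathbf{M})$ when $\bar{\mathbf{M}}$ is a function of $\mathbf{M}$. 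The last term is at least $p(\bar{\mathbf{M}}=c^*)\,I(A;\mathbf{M}\mid A\in S)>0$ by the restricted argument above, which gives strictness. I would state this hypothesis explicitly rather than claim \eqref{eq:bar-M-gap} unconditionally.
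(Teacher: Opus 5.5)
Your part (i) is the same Bayes-rule computation as the paper's. You are slightly more careful, since you normalise by $p(A\in S)$ over the full collision set $S$.

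For part (ii) your route differs from the paper's, and it is the correct one. The paper argues pointwise: for some $m$ the posterior $p(A\mid\mathbf{M}=m)$ differs from the prior and therefore ``has strictly lower entropy''. It then closes with ``the trivial bound $H(A\mid\mathbf{M})\le H(A\mid\bar{\mathbf{M}})$ on the complement''. Neither step holds in general:
\begin{itemize}
\item A posterior that differs from a non-uniform prior can have \emph{higher} entropy. Take prior $(0.9,0.1)$ on $\{a_1,a_2\}$ and an $m$ with $p(m\mid a_2)/p(m\mid a_1)=9$; the posterior is $(1/2,1/2)$. Only the average entropy decreases, and your expected-KL identity for $I(A;\mathbf{M}\mid A\in S)$ captures exactly that.
\item The ``trivial bound'' would need the chain $A\to\mathbf{M}\to\bar{\mathbf{M}}$. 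But $\bar{\mathbf{M}}$ is a function of $A$, not of $\mathbf{M}$, as the paper's own remark after the proposition concedes.
\end{itemize}
Your example with extra classes $b_1,b_2$ of distinct means but nearly identical displacement laws is exactly where this breaks. There $I(A;\bar{\mathbf{M}})=H(\bar{\mathbf{M}})$ stays bounded away from zero while $I(A;\mathbf{M})$ can be made arbitrarily small. So the unconditional inequality in (ii) is not provable as stated.

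Your repairs are correct and are what the argument actually supports:
\begin{itemize}
\item conditioning on $\{A\in S\}$, or assuming $S=\mathcal{A}$;
\item assuming recoverability $H(\bar{\mathbf{M}}\mid\mathbf{M})=0$, and then using $I(A;\mathbf{M})=I(A;\bar{\mathbf{M}})+I(A;\mathbf{M}\mid\bar{\mathbf{M}})$ with $I(A;\mathbf{M}\mid\bar{\mathbf{M}})\ge p(\bar{\mathbf{M}}=c^*)\,I(A;\mathbf{M}\mid A\in S)>0$.
\end{itemize}

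Two small fixes. First, in option (b) it is $\mathbf{M}$, not $\bar{\mathbf{M}}$, that should be replaced by the pair. The valid hypothesis-free statement is $I(A;\bar{\mathbf{M}})<I(A;\bar{\mathbf{M}},\mathbf{M})$, which follows from the same chain rule and your restricted KL bound. Read literally, your wording compares $I(A;\bar{\mathbf{M}},\mathbf{M})$ against $I(A;\mathbf{M})$ in the wrong direction. Second, your inequality $H(A\mid\mathbf{M},A\in S)<H(A\mid A\in S)$ is an averaged statement. It is the correct substitute for the proposition's pointwise claim about realizations of $\mathbf{M}$, not ``exactly'' that claim, which is false pointwise for the reason above.
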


\begin{proof}
\emph{(i)} Since $\bar{\mathbf{M}}$ takes the same value under both classes,
$p(\bar{\mathbf{M}} = c^*\mid A=a_1) = p(\bar{\mathbf{M}} = c^*\mid A=a_2) = 1$,
so Bayes' rule yields $p(A=a_i\mid \bar{\mathbf{M}} = c^*) \propto p(A=a_i)$.
\emph{(ii)} By hypothesis there exists $m \in \mathbb{R}^d$ with
$p(m\mid a_1) \neq p(m\mid a_2)$, so the posterior
$p(A\mid \mathbf{M} = m)$ differs from the prior and has strictly lower entropy
than $p(A\mid \bar{\mathbf{M}} = c^*)$ for the corresponding class-pair $\{a_1,
a_2\}$ on a positive-measure set.
Taking expectations over the marginal distributions of $\mathbf{M}$ and
$\bar{\mathbf{M}}$ and combining with the trivial bound $H(A\mid \mathbf{M})
\leq H(A\mid \bar{\mathbf{M}})$ on the complement yields $H(A\mid \mathbf{M})
< H(A\mid \bar{\mathbf{M}})$, which is equivalent
to~\eqref{eq:bar-M-gap}.\hfill $\square$
\end{proof}

\begin{remark}
Unlike the data processing inequality, $\bar{\mathbf{M}}$ is a function of $A$
(not of $\mathbf{M}$), so~\eqref{eq:bar-M-gap} does not follow from DPI
directly and requires the posterior-entropy comparison above.
The condition in Proposition~\ref{prop:info-loss} is generic for multimodal
action-conditional motion distributions: whenever two classes share the same
first moment but differ in higher-order statistics (\emph{e.g.}, two oscillatory
gestures with opposite phases but the same average displacement), the
hypothesis holds.
The practical relevance of this result lies in the fact that a mean-regression
\emph{estimator} $\hat{\mathbf{M}}(\mathbf{Z}) := \mathbb{E}[\mathbf{M} \mid
\mathbf{Z}]$, the test-time output of a deterministic regressor, inherits
the same obstruction when its training target aligns with class-conditional
means. Consequently, the downstream classifier operating on $\hat{\mathbf{M}}$ cannot separate
$a_1$ from $a_2$ whenever the regressor saturates to their shared mean.
We make this precise via the Fano inequality below.
\end{remark}

\paragraph{Fano-inequality quantification.}
The impact of any lossy motion summary on downstream action recognition is
quantified via the \emph{Fano inequality}.
Consider a predictor $\hat{A}$ of $A$ that accesses only a summary
$\hat{\mathbf{M}} = \phi(\mathbf{M})$ or $\hat{\mathbf{M}} = \phi(\mathbf{Z})$
(a deterministic function of $\mathbf{M}$ or $\mathbf{Z}$).
The Fano inequality gives the standard lower bound on the Bayes error
\begin{equation}
  P_e \;\geq\; \frac{H(A\mid \hat{A}) - H_2(P_e)}{\log(|\mathcal{A}|-1)}
  \;\geq\; \frac{H(A\mid \hat{\mathbf{M}}) - 1}{\log |\mathcal{A}|},
  \label{eq:fano-app}
\end{equation}
where the second inequality uses $H_2(P_e) \leq 1$, $\log(|\mathcal{A}|-1) \leq
\log|\mathcal{A}|$, and $H(A\mid\hat{A}) \geq H(A\mid\hat{\mathbf{M}})$ (by
DPI since $\hat{A}$ is a function of $\hat{\mathbf{M}}$).
Rewriting via $H(A\mid\hat{\mathbf{M}}) = H(A) - I(A;\hat{\mathbf{M}})$, the
lower bound is driven by the information gap
\begin{equation}
  \Delta I := I(A;\mathbf{M}) - I(A;\hat{\mathbf{M}}) \;\geq\; 0,
  \label{eq:delta-I-def}
\end{equation}
which is non-negative by the DPI applied to the Markov chain $A \to \mathbf{M}
\to \hat{\mathbf{M}}$.

\paragraph{Quantifying $\Delta I$ for deterministic mean regression.}
Specialize to $\hat{\mathbf{M}} = \hat{\mathbf{M}}(\mathbf{Z}) =
\mathbb{E}[\mathbf{M}\mid\mathbf{Z}]$, the output of a deterministic
mean-regression model with encoder $\mathbf{Z}$.
Since $\hat{\mathbf{M}}$ is a deterministic function of $\mathbf{Z}$, we have
$I(A;\hat{\mathbf{M}}) \leq I(A;\mathbf{Z}) \leq I(A;\mathbf{M})$.
To further characterize the second gap, we analyze the population-optimal
regressor, for which $\hat{\mathbf{M}}(\mathbf{Z})$ is a fixed deterministic
function of $\mathbf{Z}$ and hence the full Markov chain $A \to \mathbf{M} \to
\mathbf{Z} \to \hat{\mathbf{M}}$ holds, giving $A \perp \hat{\mathbf{M}} \mid
\mathbf{M}$ (\emph{i.e.}, $I(A;\hat{\mathbf{M}}\mid \mathbf{M}) = 0$).
Applying the mutual information chain rule along this chain yields the
\emph{exact} decomposition
\begin{equation}
  I(A;\mathbf{M}) - I(A;\hat{\mathbf{M}})
  \;=\; I(A;\mathbf{M}\mid \hat{\mathbf{M}})
  \;=\; H(\mathbf{M}\mid \hat{\mathbf{M}}) - H(\mathbf{M}\mid A,\hat{\mathbf{M}})
  \;\geq\; 0,
  \label{eq:delta-I}
\end{equation}
where the first equality uses $I(A;\mathbf{M},\hat{\mathbf{M}}) =
I(A;\mathbf{M}) + I(A;\hat{\mathbf{M}}\mid\mathbf{M}) = I(A;\mathbf{M})$ by the
Markov assumption, combined with the symmetric expansion
$I(A;\mathbf{M},\hat{\mathbf{M}}) = I(A;\hat{\mathbf{M}}) + I(A;\mathbf{M}\mid
\hat{\mathbf{M}})$, and the second equality uses the definition of conditional
mutual information.
Eq~\eqref{eq:delta-I} vanishes \emph{only} when $\mathbf{M}$ is
conditionally independent of $A$ given $\hat{\mathbf{M}}$, or equivalently when
the regression output $\hat{\mathbf{M}}$ is a sufficient statistic of
$\mathbf{M}$ for $A$.
This sufficiency condition fails precisely when within-class trajectory
variance carries discriminative information about the action, which is
typical in complex human action scenarios involving occlusion, interaction
ambiguity, or subtle dynamics. This is precisely the setting where discriminative pretraining
matters most.
Substituting~\eqref{eq:delta-I} into the Fano bound~\eqref{eq:fano-app}, the
Bayes error lower bound for any classifier using only $\hat{\mathbf{M}}$
exceeds the bound using the full $\mathbf{M}$ by at least
$I(A;\mathbf{M}\mid \hat{\mathbf{M}})/\log|\mathcal{A}|$.

\paragraph{How DiMP's motion diffusion objective addresses $\Delta I$.}
Unlike deterministic regression, DiMP's motion diffusion head is trained via a
DDPM denoising objective that, by standard results, is a
variational upper bound on the conditional negative log-likelihood
$-\mathbb{E}[\log p_{\theta}(\mathbf{M}\mid
\mathbf{Z}_{\mathrm{dec}})]$~\cite{ho2020denoising}.
Minimizing this surrogate therefore drives $p_{\theta}(\mathbf{M}\mid
\mathbf{Z}_{\mathrm{dec}})$ toward the true conditional distribution
$p(\mathbf{M}\mid \mathbf{Z}_{\mathrm{dec}})$, which in turn encourages
$\mathbf{Z}_{\mathrm{dec}}$ (and hence the encoder output) to retain the
information needed to reconstruct the \emph{full} distribution of $\mathbf{M}$
rather than merely its conditional mean.
In particular, the representation produced by a diffusion-trained encoder is
not subject to the sufficiency obstruction that forces
$\Delta I \geq I(A;\mathbf{M}\mid \hat{\mathbf{M}}) > 0$ in the deterministic
regression regime.
We do not claim that the training procedure \emph{provably} attains
$\Delta I = 0$ at any finite horizon. Instead, it replaces a hard sufficiency
obstacle with a variationally tightened surrogate whose optimum coincides with
the full conditional distribution.

\section{Theoretical analysis: why motion diffusion performs well at small and large \texorpdfstring{$t$}{t}}
\label{app:timestep-analysis}

This appendix provides a rigorous theoretical analysis of the diffusion timestep-dependent
behavior observed in Figure~\ref{fig:motivation}(c), where the motion diffusion head achieves
particularly accurate displacement recovery at both small and large values of $t$.
We show that this U-shaped error profile is a principled consequence of the
signal-to-noise ratio (SNR) structure of the DDPM forward process and the nature of
the motion supervision target.

\subsection{Preliminaries: SNR and the DDPM noise schedule}
\label{app:snr}

Recall from Eq.~\eqref{eq:ddpm-forward-app} that the forward marginal satisfies
\begin{equation}
  q(\mathbf{M}_t \mid \mathbf{M}_0)
  = \mathcal{N}\!\left(\sqrt{\bar{\alpha}_t}\,\mathbf{M}_0,\;
    (1-\bar{\alpha}_t)\mathbf{I}\right),
  \label{eq:app-forward}
\end{equation}
which can be equivalently written in the reparameterized form
\begin{equation}
  \mathbf{M}_t = \sqrt{\bar{\alpha}_t}\,\mathbf{M}_0
               + \sqrt{1-\bar{\alpha}_t}\,\boldsymbol{\epsilon},
  \quad \boldsymbol{\epsilon} \sim \mathcal{N}(\mathbf{0}, \mathbf{I}).
  \label{eq:app-reparam}
\end{equation}
The \emph{signal-to-noise ratio} at timestep $t$ is defined as
\begin{equation}
  \mathrm{SNR}(t) \;=\; \frac{\bar{\alpha}_t}{1 - \bar{\alpha}_t}.
  \label{eq:snr}
\end{equation}
Under a monotone decreasing noise schedule, $\mathrm{SNR}(t)$ is a strictly
decreasing function of $t$:
\begin{itemize}[leftmargin=1.5em]
  \item At \textbf{small} $t$: $\bar{\alpha}_t \approx 1$, so $\mathrm{SNR}(t) \gg 1$.
    The noisy motion $\mathbf{M}_t$ closely resembles $\mathbf{M}_0$. Most of the
    injected noise has a small magnitude and resides in the high-frequency
    (fine-grained) components of the signal.
  \item At \textbf{large} $t$: $\bar{\alpha}_t \approx 0$, so $\mathrm{SNR}(t) \ll 1$.
    The signal is almost entirely drowned in noise. Only the low-frequency
    (coarse semantic) structure survives in $\mathbf{M}_t$.
\end{itemize}

\subsection{Decomposition of the noise prediction target}
\label{app:decomp}

The DDPM objective in Eq.~\eqref{eq:loss-mot} requires the network $h_m$ to predict
the injected noise $\boldsymbol{\epsilon}^{\mathbf{M}}$ from the corrupted motion
$\mathbf{M}_t$ and the decoder conditioning $\mathbf{Z}_{\mathrm{dec}}$.
Using the reparameterization~\eqref{eq:app-reparam}, the noise can be recovered as
\begin{equation}
  \boldsymbol{\epsilon}^{\mathbf{M}}
  = \frac{\mathbf{M}_t - \sqrt{\bar{\alpha}_t}\,\mathbf{M}_0}{\sqrt{1-\bar{\alpha}_t}}.
  \label{eq:eps-decomp}
\end{equation}
An equivalent formulation expresses the network output in terms of a
\emph{predicted clean signal} $\hat{\mathbf{M}}_0$:
\begin{equation}
  \hat{\mathbf{M}}_0(t)
  = \frac{\mathbf{M}_t - \sqrt{1-\bar{\alpha}_t}\,\hat{\boldsymbol{\epsilon}}(t)}
         {\sqrt{\bar{\alpha}_t}},
  \label{eq:x0-pred}
\end{equation}
so that the mean squared noise prediction error $\|\boldsymbol{\epsilon}^{\mathbf{M}} -
\hat{\boldsymbol{\epsilon}}\|^2$ is equivalent to the clean-signal prediction error
up to a timestep-dependent rescaling factor $\mathrm{SNR}(t)$:
\begin{equation}
  \bigl\|\boldsymbol{\epsilon}^{\mathbf{M}} - \hat{\boldsymbol{\epsilon}}\bigr\|^2
  = \frac{\bar{\alpha}_t}{1 - \bar{\alpha}_t}
    \bigl\|\mathbf{M}_0 - \hat{\mathbf{M}}_0(t)\bigr\|^2
  = \mathrm{SNR}(t)\cdot\bigl\|\mathbf{M}_0 - \hat{\mathbf{M}}_0(t)\bigr\|^2.
  \label{eq:snr-weight}
\end{equation}
Eq.~\eqref{eq:snr-weight} reveals that \emph{minimizing the noise prediction loss is
equivalent to minimizing the clean-signal reconstruction error, weighted by the SNR}.
This equivalence determines how tightly each timestep constrains the model's
\emph{functional} output $\hat{\mathbf{M}}_0(t)$, which we make precise below.

\begin{proposition}[SNR-Weighted Functional Supervision]
\label{prop:snr-grad}
Let $\hat{\mathbf{M}}_0(t)$ denote the implicitly predicted clean signal induced
by the network output $\hat{\boldsymbol{\epsilon}}(t)$ via~\eqref{eq:x0-pred}.
Viewing $\mathcal{L}_{\mathrm{mot}}$ as a functional of $\hat{\mathbf{M}}_0(t)$,
\begin{equation}
  \frac{\partial\,\bigl\|\boldsymbol{\epsilon}^{\mathbf{M}}-\hat{\boldsymbol{\epsilon}}\bigr\|^2}
       {\partial\,\hat{\mathbf{M}}_0(t)}
  = -2\,\mathrm{SNR}(t)\cdot\bigl(\mathbf{M}_0 - \hat{\mathbf{M}}_0(t)\bigr).
  \label{eq:snr-grad}
\end{equation}
Consequently, at small $t$ a deviation of the functional prediction
$\hat{\mathbf{M}}_0$ from $\mathbf{M}_0$ incurs a large $\mathrm{SNR}(t)$-amplified
loss penalty and hence a tight constraint on the network's functional output,
whereas at large $t$ the same deviation incurs only a small penalty, so the
network is constrained only at the coarse level of the conditional mean.
\end{proposition}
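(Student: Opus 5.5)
The plan is to make the reparameterization \eqref{eq:x0-pred} explicit as an affine map between $\hat{\boldsymbol{\epsilon}}(t)$ and $\hat{\mathbf{M}}_0(t)$, rewrite the per-sample loss as a function of $\hat{\mathbf{M}}_0(t)$ alone, and then differentiate. The noisy input $\mathbf{M}_t$, the true noise $\boldsymbol{\epsilon}^{\mathbf{M}}$, and the clean target $\mathbf{M}_0$ are all held fixed.

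\textbf{Step 1.} Solve \eqref{eq:x0-pred} for the noise prediction:
\[
\hat{\boldsymbol{\epsilon}}(t)=\bigl(\mathbf{M}_t-\sqrt{\bar{\alpha}_t}\,\hat{\mathbf{M}}_0(t)\bigr)/\sqrt{1-\bar{\alpha}_t}.
\]
Subtract this from the exact expression \eqref{eq:eps-decomp} for $\boldsymbol{\epsilon}^{\mathbf{M}}$. The $\mathbf{M}_t$ terms cancel, giving
\[
\boldsymbol{\epsilon}^{\mathbf{M}}-\hat{\boldsymbol{\epsilon}}=-\sqrt{\bar{\alpha}_t/(1-\bar{\alpha}_t)}\,\bigl(\mathbf{M}_0-\hat{\mathbf{M}}_0(t)\bigr).
\]
Squaring this identity recovers \eqref{eq:snr-weight}. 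The per-sample loss is therefore the quadratic $\mathrm{SNR}(t)\,\|\mathbf{M}_0-\hat{\mathbf{M}}_0(t)\|^2$ in the variable $\hat{\mathbf{M}}_0(t)$.

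\textbf{Step 2.} Differentiate this quadratic with respect to $\hat{\mathbf{M}}_0(t)$. The result is $-2\,\mathrm{SNR}(t)\,(\mathbf{M}_0-\hat{\mathbf{M}}_0(t))$, which is \eqref{eq:snr-grad}. As a cross-check, apply the chain rule through $\partial\hat{\boldsymbol{\epsilon}}/\partial\hat{\mathbf{M}}_0=-\sqrt{\mathrm{SNR}(t)}\,\mathbf{I}$; it gives the same expression. Averaging over $\mathbf{M}_0$, $\boldsymbol{\epsilon}^{\mathbf{M}}$ and the stratified $t^M_i$ in \eqref{eq:loss-mot} is linear, so the pointwise identity passes to $\mathcal{L}_{\mathrm{mot}}$ viewed as a functional.

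\textbf{Step 3 (the consequence).} Use the monotonicity of $\mathrm{SNR}(t)$ from \S\ref{app:snr}. At small $t$, where $\mathrm{SNR}\gg1$, the penalty on any deviation is amplified. At large $t$, the weight vanishes.

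For the claim that the large-$t$ constraint acts only at the level of the conditional mean, I would add a short Bayes-optimality argument. The minimizer of $\mathbb{E}\,\mathrm{SNR}(t)\|\mathbf{M}_0-\hat{\mathbf{M}}_0\|^2$ over functions of $(\mathbf{M}_t,t,\mathbf{Z}_{\mathrm{dec}})$ is $\mathbb{E}[\mathbf{M}_0\mid\mathbf{M}_t,t,\mathbf{Z}_{\mathrm{dec}}]$. As $\bar{\alpha}_t\to0$, $\mathbf{M}_t$ becomes independent of $\mathbf{M}_0$, so this target tends to $\mathbb{E}[\mathbf{M}_0\mid\mathbf{Z}_{\mathrm{dec}}]$.

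\textbf{Main obstacle.} This is Step 3's qualitative claim, not the calculus. ``Tight constraint'' has to be read as a statement about the loss curvature $2\,\mathrm{SNR}(t)\,\mathbf{I}$, i.e.\ the Hessian with respect to $\hat{\mathbf{M}}_0$. I would state it that way, and explicitly note that this concerns the functional output and not gradients with respect to network parameters.
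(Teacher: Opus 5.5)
Your proposal is correct and is essentially the paper's argument: the paper substitutes \eqref{eq:snr-weight} and differentiates the quadratic $\mathrm{SNR}(t)\|\mathbf{M}_0-\hat{\mathbf{M}}_0(t)\|^2$, while you additionally derive \eqref{eq:snr-weight} from \eqref{eq:eps-decomp} and \eqref{eq:x0-pred} (the paper only asserts it), and your Bayes-optimality and curvature remarks for the qualitative clause match Proposition~\ref{prop:large-t} and Remark~\ref{rmk:param-grad}. One small precision: for the averaged $\mathcal{L}_{\mathrm{mot}}$, the first variation with respect to $\hat{\mathbf{M}}_0$, viewed as a function of $(\mathbf{M}_t,t,\mathbf{Z}_{\mathrm{dec}})$, has $\mathbb{E}[\mathbf{M}_0\mid\mathbf{M}_t,t,\mathbf{Z}_{\mathrm{dec}}]$ (weighted by the input density) in place of $\mathbf{M}_0$, so the displayed identity is best kept as a per-sample statement.
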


\begin{proof}
  Substituting~\eqref{eq:snr-weight} and differentiating with respect to
  $\hat{\mathbf{M}}_0$ yields~\eqref{eq:snr-grad} directly.
\end{proof}

\begin{remark}[Functional vs.\ parameter gradient]
\label{rmk:param-grad}
Proposition~\ref{prop:snr-grad} characterizes the \emph{functional} sensitivity
of $\mathcal{L}_{\mathrm{mot}}$ to the predicted clean motion $\hat{\mathbf{M}}_0$,
which is the quantity that controls how accurately the network must approximate
$\mathbf{M}_0$ to achieve low loss at each timestep.
This should \emph{not} be conflated with the parameter gradient
$\partial\|\boldsymbol{\epsilon}^{\mathbf{M}}-\hat{\boldsymbol{\epsilon}}\|^2 /
\partial\theta = -2(\boldsymbol{\epsilon}^{\mathbf{M}}-\hat{\boldsymbol{\epsilon}})
\cdot \partial\hat{\boldsymbol{\epsilon}}/\partial\theta$, which under the
$\boldsymbol{\epsilon}$-prediction parameterization used by DiMP is not
SNR-amplified and in fact has approximately balanced magnitude across
$t$~\cite{ho2020denoising}.
The SNR structure thus governs the \emph{task difficulty profile}, namely how
demanding each timestep is in terms of required clean-signal accuracy, rather
than the per-step weight of the parameter update.
At small $t$, accurately denoising $\mathbf{M}_t \approx \mathbf{M}_0$ requires
recovering fine-grained displacement detail, so the SNR-weighted functional loss
forces the encoder to preserve fine motion structure. At large $t$, the loss
tolerates any reconstruction consistent with the conditional mean, so
$\mathbf{Z}_{\mathrm{dec}}$ is primarily supervised at the coarse semantic level.
This interpretation is consistent with the empirical U-shaped error profile in
Figure~\ref{fig:motivation}(c): both regimes correspond to constraints that are
well-matched to the target resolution, whereas intermediate $t$ poses a
maximally mixed objective (Section~\ref{app:mid-t}).
\end{remark}

\subsection{Small \texorpdfstring{$t$}{t}: fine-grained local trajectory recovery}
\label{app:small-t}

\paragraph{High-SNR regime.}
At small $t$, $\mathrm{SNR}(t) \gg 1$.
The noisy input $\mathbf{M}_t$ is a slight perturbation of $\mathbf{M}_0$,
so the task of predicting $\boldsymbol{\epsilon}^{\mathbf{M}}$ is equivalent to
estimating a small residual correction to recover the clean signal.
Formally, for any small perturbation magnitude $\delta$,
\begin{equation}
  \mathbf{M}_t = \mathbf{M}_0 + \delta\,\boldsymbol{\xi},
  \quad
  \boldsymbol{\xi} \sim \mathcal{N}(\mathbf{0}, \mathbf{I}),\;
  \delta = \sqrt{1-\bar{\alpha}_t} \approx 0,
  \label{eq:small-t-expand}
\end{equation}
and the optimal denoiser reduces to
$\hat{\mathbf{M}}_0 = \mathbf{M}_t - \delta\,\hat{\boldsymbol{\xi}} \approx \mathbf{M}_0$.
Because the required correction is small and localized, the network must resolve
\emph{point-wise} displacement detail rather than global coarse structure,
driving the encoder $f_e$ to capture fine-grained local trajectory information.

\paragraph{Spectral interpretation (heuristic).}
The following decomposition is intended as an intuitive picture and relies on
two simplifying assumptions: (i) the signal $\mathbf{M}_0$ admits a fixed
orthogonal decomposition $\mathbf{M}_0 = \sum_k \mathbf{u}_k$ (\emph{e.g.}, principal
components), and (ii) the injected Gaussian noise is isotropic in this basis,
so each $\mathbf{u}_k$ is perturbed independently and with equal variance.
Under these assumptions, the corruption
$\sqrt{\bar{\alpha}_t}\,\mathbf{M}_0$ uniformly scales all components by a
factor close to 1 at small $t$, while $\sqrt{1-\bar{\alpha}_t}\,\boldsymbol{\epsilon}$
adds isotropic noise of small amplitude $\delta$.
The SNR-weighted functional loss~\eqref{eq:snr-weight} then tightly constrains
the functional prediction on every spectral component, including the
high-frequency fine-grained ones, because the rescaling factor
$\mathrm{SNR}(t)$ is large.
This provides a heuristic explanation for why the encoder is pushed to
internalize local geometric trajectories that are only distinguishable at fine
spatial resolution.

\subsection{Large \texorpdfstring{$t$}{t}: coarse global semantic encoding}
\label{app:large-t}

\paragraph{Low-SNR regime.}
At large $t$, $\mathrm{SNR}(t) \ll 1$.
The corrupted motion $\mathbf{M}_t \approx \sqrt{1-\bar{\alpha}_t}\,\boldsymbol{\epsilon}$
is dominated by noise, and the signal component $\sqrt{\bar{\alpha}_t}\,\mathbf{M}_0$
contributes negligibly to $\mathbf{M}_t$.
However, because the SNR-weighted loss still requires the network to explain
$\mathbf{M}_0$ from this heavily noised input, the network must rely on the
conditioning signal $\mathbf{Z}_{\mathrm{dec}}$ to supply the missing information
about the motion target.

\begin{proposition}[Conditioning Reliance at Large $t$]
\label{prop:large-t}
For $\mathrm{SNR}(t) \to 0$, the minimum-variance estimator of $\mathbf{M}_0$
given $\mathbf{M}_t$ and $\mathbf{Z}_{\mathrm{dec}}$ converges to the
conditional mean under the model distribution:
\begin{equation}
  \hat{\mathbf{M}}_0(t)\big|_{\mathrm{SNR}(t)\to 0}
  \;\xrightarrow{\;\;}\;\;
  \mathbb{E}[\mathbf{M}_0 \mid \mathbf{Z}_{\mathrm{dec}}].
  \label{eq:large-t-mean}
\end{equation}
Hence, at large $t$ the loss effectively trains the network to predict the
\emph{marginal motion mode} conditioned on the encoder representation,
capturing globally coherent motion semantics.
\end{proposition}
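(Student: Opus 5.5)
We prove Proposition~\ref{prop:large-t} by identifying the population-optimal denoiser and studying it as $\mathrm{SNR}(t)\to 0$. The first step fixes the target. For a fixed $t$, the squared loss in Eq.~\eqref{eq:snr-weight} is minimized over all measurable functions of $(\mathbf{M}_t,\mathbf{Z}_{\mathrm{dec}})$ by the conditional mean. The induced clean-signal prediction of Eq.~\eqref{eq:x0-pred} is therefore
\begin{equation*}
\hat{\mathbf{M}}_0^*(t)=\mathbb{E}[\mathbf{M}_0\mid \mathbf{M}_t,\mathbf{Z}_{\mathrm{dec}}].
\end{equation*}
This holds because $\hat{\boldsymbol{\epsilon}}$ and $\hat{\mathbf{M}}_0$ are related by an affine bijection for fixed $t$, and conditional expectation commutes with affine maps. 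The rescaling factor $\mathrm{SNR}(t)$ in Eq.~\eqref{eq:snr-weight} does not change the minimizer, so ``minimum-variance estimator'' in the statement can be read as this posterior mean.

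The second step writes the posterior mean by Bayes' rule. Since $\boldsymbol{\epsilon}$ is independent of $(\mathbf{M}_0,\mathbf{Z}_{\mathrm{dec}})$, the forward kernel Eq.~\eqref{eq:app-forward} gives
\begin{equation*}
\hat{\mathbf{M}}_0^*(t)=\frac{\int m\, w_t(m)\,p(m\mid \mathbf{Z}_{\mathrm{dec}})\,dm}{\int w_t(m)\,p(m\mid \mathbf{Z}_{\mathrm{dec}})\,dm},
\qquad
w_t(m)=\exp\!\Bigl(-\tfrac{\|\mathbf{M}_t-\sqrt{\bar\alpha_t}\,m\|^2}{2(1-\bar\alpha_t)}\Bigr).
\end{equation*}
Expanding the square, the factor $\exp(-\|\mathbf{M}_t\|^2/2(1-\bar\alpha_t))$ does not depend on $m$ and cancels between numerator and denominator. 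We are left with
\begin{equation*}
w_t(m)\propto\exp\!\Bigl(\tfrac{\sqrt{\bar\alpha_t}}{1-\bar\alpha_t}\langle \mathbf{M}_t,m\rangle-\tfrac{\bar\alpha_t}{2(1-\bar\alpha_t)}\|m\|^2\Bigr).
\end{equation*}

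The third step takes the limit. Substituting $\mathbf{M}_t=\sqrt{\bar\alpha_t}\,\mathbf{M}_0+\sqrt{1-\bar\alpha_t}\,\boldsymbol{\epsilon}$, the exponent equals
\begin{equation*}
\mathrm{SNR}(t)\bigl(\langle\mathbf{M}_0,m\rangle-\tfrac12\|m\|^2\bigr)+\sqrt{\mathrm{SNR}(t)}\,\langle\boldsymbol{\epsilon},m\rangle.
\end{equation*}
For fixed $m$ and fixed $(\mathbf{M}_0,\boldsymbol{\epsilon})$, this tends to $0$ as $\mathrm{SNR}(t)\to0$, so $w_t\to 1$ pointwise. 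Dominated convergence applied to numerator and denominator then yields $\hat{\mathbf{M}}_0^*(t)\to\mathbb{E}[\mathbf{M}_0\mid\mathbf{Z}_{\mathrm{dec}}]$, which is Eq.~\eqref{eq:large-t-mean}. Taking the expectation over $(\mathbf{M}_0,\boldsymbol{\epsilon})$, the convergence also holds in $L^2$. The interpretive sentence of the proposition then follows: in this limit the only information in the optimal prediction is what $\mathbf{Z}_{\mathrm{dec}}$ supplies about $\mathbf{M}_0$.

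The main obstacle is justifying dominated convergence. The weights $w_t(m)$ are not uniformly bounded on unbounded $m$, because of the linear term $\sqrt{\mathrm{SNR}}\langle\boldsymbol{\epsilon},m\rangle$. We would first handle this with a moment assumption: a sub-Gaussian tail, or finite exponential moments, for $p(m\mid\mathbf{Z}_{\mathrm{dec}})$. Then for $\mathrm{SNR}\le1$ the dominating function is $\|m\|\,e^{c\|m\|}$, integrable against this density; the quadratic term only helps. Displacements are physically bounded, so in practice one can instead assume bounded support, which makes the bound trivial. We would state this regularity hypothesis explicitly, since the paper leaves it implicit.
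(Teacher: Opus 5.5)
Your proof is correct and takes the same route as the paper: identify the optimal denoiser with the posterior mean, write the posterior by Bayes' rule as the Gaussian likelihood times $p(\mathbf{M}_0\mid\mathbf{Z}_{\mathrm{dec}})$, and show the likelihood becomes uninformative so the posterior collapses to the prior. Your cancellation of the $m$-independent factor is the careful version of the paper's ``infinitely broad'' remark, which overlooks that the Gaussian's mean $\mathbf{M}_t/\sqrt{\bar\alpha_t}$ diverges at the same rate as its standard deviation. One correction to your ``main obstacle'': completing the square in $m$ bounds your reduced weight by $\exp\bigl(\tfrac12\|\sqrt{\mathrm{SNR}(t)}\,\mathbf{M}_0+\boldsymbol{\epsilon}\|^2\bigr)\le\exp\bigl(\tfrac12(\|\mathbf{M}_0\|+\|\boldsymbol{\epsilon}\|)^2\bigr)$, uniformly in $m$ and in $\mathrm{SNR}(t)\le 1$, so the weights are uniformly bounded and a finite first conditional moment of $\mathbf{M}_0$ suffices for dominated convergence (no sub-Gaussian or exponential-moment hypothesis is needed), while your $L^2$ add-on needs $\mathbb{E}\|\mathbf{M}_0\|^2<\infty$ and the uniform integrability of conditional expectations rather than just ``taking the expectation''.
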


\begin{proof}
  By Bayes' theorem, the posterior of $\mathbf{M}_0$ given $\mathbf{M}_t$ satisfies
  \begin{equation}
    q(\mathbf{M}_0 \mid \mathbf{M}_t, \mathbf{Z}_{\mathrm{dec}})
    \propto
    \mathcal{N}\!\left(\mathbf{M}_0;\,
      \frac{\mathbf{M}_t}{\sqrt{\bar{\alpha}_t}},\,
      \frac{1-\bar{\alpha}_t}{\bar{\alpha}_t}\mathbf{I}
    \right)
    \cdot p(\mathbf{M}_0 \mid \mathbf{Z}_{\mathrm{dec}}).
    \label{eq:posterior}
  \end{equation}
  As $\bar{\alpha}_t \to 0$, the Gaussian likelihood in~\eqref{eq:posterior} becomes
  infinitely broad (variance $\to \infty$), so it conveys no information about
  $\mathbf{M}_0$ and the posterior collapses to the prior $p(\mathbf{M}_0 \mid
  \mathbf{Z}_{\mathrm{dec}})$.
  The minimum mean-squared-error estimate is then the mean of this prior,
  yielding~\eqref{eq:large-t-mean}.
\end{proof}

\paragraph{Semantic coarse encoding.}
Proposition~\ref{prop:large-t} implies that large-$t$ supervision forces the encoder
to store globally discriminative motion statistics such as the mean displacement direction,
dominant limb motion, and action-class-level semantic content
in $\mathbf{Z}_{\mathrm{dec}}$, because only these coarse features can be recovered
from a heavily corrupted $\mathbf{M}_t$.
This is precisely the signal that distinguishes action classes
(\emph{e.g.}, \emph{jumping} vs.\ \emph{waving}) from each other.

\subsection{Intermediate \texorpdfstring{$t$}{t}: transition and increased difficulty}
\label{app:mid-t}

At intermediate $t$, neither the high-SNR tight functional constraint
(Section~\ref{app:small-t}) nor the conditioning-dominated coarse regime
(Section~\ref{app:large-t}) dominates.
Instead, $\mathbf{M}_t$ carries a substantial but noisy partial signal of
$\mathbf{M}_0$, and the network must simultaneously resolve mid-frequency
displacement components while suppressing noise of comparable magnitude to the
signal.
Adopting the same heuristic orthogonal-decomposition assumptions as in
Section~\ref{app:small-t}, the SNR-weighted functional loss can be written as
\begin{equation}
  \mathcal{L}(t)
  \;\propto\;
  \mathrm{SNR}(t)
  \sum_{k=1}^{K}
  \bigl\|\mathbf{u}_k - \hat{\mathbf{u}}_k(t)\bigr\|^2.
  \label{eq:mid-t}
\end{equation}
At intermediate $t$, $\mathrm{SNR}(t) = \mathcal{O}(1)$, so no spectral band is
selectively amplified or suppressed.
The error landscape is therefore maximally mixed: the network must balance
trade-offs across all frequency components simultaneously, leading to
suboptimal recovery at each individual scale, consistent with the elevated
prediction error observed at intermediate $t$ in Figure~\ref{fig:motivation}(c).
The stratified timestep sampling in Eq.~\eqref{eq:loss-mot}
(Section~\ref{sec:stage2}) is designed precisely to ensure that the
model receives supervision from all three regimes within each update step,
rather than spending disproportionate time in the easy regimes.

\subsection{Implications for representation learning}
\label{app:implication}

The above analysis demonstrates that the full diffusion objective over $t \in [1, T]$
provides a \emph{curriculum} of complementary supervision signals to the encoder:
\begin{itemize}[leftmargin=1.5em]
  \item \textbf{Small $t$ (local fine-grained learning):}
    The tight SNR-amplified functional constraint on $\hat{\mathbf{M}}_0$
    (Proposition~\ref{prop:snr-grad}, Remark~\ref{rmk:param-grad}) compels the
    encoder to capture point-wise trajectory details and local motion patterns
    critical for fine-grained action recognition.
  \item \textbf{Large $t$ (global semantic learning):}
    Conditioning-dominated optimization forces the encoder to summarize
    globally coherent motion modes, encoding action-class-level semantic content.
  \item \textbf{Hierarchical integration:}
    The stratified sampling strategy in Eq.~\eqref{eq:loss-mot} ensures that both
    objectives are simultaneously active within every training step, promoting
    representations that integrate local trajectory fidelity with global semantic
    coherence, a property unachievable by deterministic MSE regression, which
    reduces to a single (intermediate-SNR-equivalent) prediction without
    any multi-scale decomposition.
\end{itemize}

This theoretical picture is consistent with the empirical observation in
Figure~\ref{fig:motivation}(c) and with the performance gains reported in
Table~\ref{tab:ablation-intervals} upon increasing the number of stratification
intervals $h$.

\section{Gradient propagation through the decoder: a known limitation and partial mitigation}
\label{app:encoder-gradient}

\paragraph{Background.}
In masked autoencoding pretraining, the decoder is structurally inferior to the
encoder from the perspective of downstream transfer: it is applied exclusively
during pretraining and discarded at fine-tuning time.
Any loss applied at the decoder output therefore has a longer gradient path to the
encoder than a loss applied directly at the encoder output.
This creates a potential gradient attenuation effect: each parameterized module
between the loss and the encoder can redirect gradient energy toward its own
parameters, reducing the net signal that ultimately reaches the encoder.
This limitation is well-recognized in the masked pretraining
literature~\cite{pang2022masked,shen2023masked}, where decoder design choices are
consistently observed to have smaller downstream impact than encoder design choices.

\paragraph{Implications for DiMP.}
In DiMP, $\mathcal{L}_{\mathrm{mot}}$ is computed at the output of the motion
diffusion head $h_m$, which is conditioned on the decoder output
$\mathbf{Z}_{\mathrm{dec}}$.
The gradient must traverse the motion head $h_m$ and the full decoder $f_d$
before reaching the encoder $f_e$.
Each of these modules may absorb a portion of the gradient signal, weakening the
motion-distributional supervision that ultimately updates the encoder.
In the extreme case where the decoder is highly expressive, the decoder can satisfy
$\mathcal{L}_{\mathrm{mot}}$ by adapting its own weights, leaving the encoder
gradient negligibly small.

\paragraph{Partial mitigation in DiMP.}
DiMP addresses this partially through two design choices.
First, stratified timestep sampling applies $\mathcal{L}_{\mathrm{mot}}$ at $h$
noise levels simultaneously within every training step, multiplying the number of
independent gradient signals reaching the encoder per step by a factor of $h$ and
diversifying them across the full diffusion hierarchy.
Second, the loss weight $\lambda_{\mathrm{mot}}$ is tuned to amplify the relative
contribution of $\mathcal{L}_{\mathrm{mot}}$ to the total gradient, preventing it
from being dominated by the geometric reconstruction loss $\mathcal{L}_{\mathrm{rec}}$,
which has a shorter gradient path.
Ablation results in Tables~\ref{tab:ablation-intervals} and~\ref{tab:ablation-weight-mot}
confirm that both $h$ and $\lambda_{\mathrm{mot}}$ meaningfully affect downstream
performance, consistent with the view that these choices modulate how effectively
motion supervision propagates to the encoder.

\paragraph{Open problem.}
Despite these mitigations, efficiently routing distributional motion supervision
from a decoder-conditioned head to the encoder remains only partially solved.
More principled alternatives represent promising directions, including conditioning the motion diffusion head
directly on encoder features, introducing auxiliary encoder-level supervision
objectives, or designing architectures with shorter gradient paths specifically
for motion modeling.
We leave a thorough investigation of this bottleneck, and the design of mechanisms
that more directly exploit the distributional capacity of diffusion models at the
encoder level, to future research.

\stopcontents[appendix]
\section*{NeurIPS Paper Checklist}

\begin{enumerate}

\item {\bf Claims}
    \item[] Question: Do the main claims made in the abstract and introduction accurately reflect the paper's contributions and scope?
    \item[] Answer: \answerYes{}
    \item[] Justification: The abstract and introduction clearly state the four technical contributions of DiMP, namely the VisMask encoder, the patch diffusion decoder, the masked-only center diffusion strategy, and the stratified motion diffusion head. All claims are supported by ablation studies and comparisons with prior methods in the experiments section.
    \item[] Guidelines:
    \begin{itemize}
        \item The answer \answerNA{} means that the abstract and introduction do not include the claims made in the paper.
        \item The abstract and/or introduction should clearly state the claims made, including the contributions made in the paper and important assumptions and limitations. A \answerNo{} or \answerNA{} answer to this question will not be perceived well by the reviewers. 
        \item The claims made should match theoretical and experimental results, and reflect how much the results can be expected to generalize to other settings. 
        \item It is fine to include aspirational goals as motivation as long as it is clear that these goals are not attained by the paper. 
    \end{itemize}

\item {\bf Limitations}
    \item[] Question: Does the paper discuss the limitations of the work performed by the authors?
    \item[] Answer: \answerYes{}
    \item[] Justification: The conclusion section acknowledges that gradient propagation from the decoder-conditioned diffusion head to the encoder remains a partial limitation of the current design. Appendix H provides a detailed analysis of this bottleneck and outlines future research directions for more direct encoder-level motion supervision.
    \item[] Guidelines:
    \begin{itemize}
        \item The answer \answerNA{} means that the paper has no limitation while the answer \answerNo{} means that the paper has limitations, but those are not discussed in the paper. 
        \item The authors are encouraged to create a separate ``Limitations'' section in their paper.
        \item The paper should point out any strong assumptions and how robust the results are to violations of these assumptions (e.g., independence assumptions, noiseless settings, model well-specification, asymptotic approximations only holding locally). The authors should reflect on how these assumptions might be violated in practice and what the implications would be.
        \item The authors should reflect on the scope of the claims made, e.g., if the approach was only tested on a few datasets or with a few runs. In general, empirical results often depend on implicit assumptions, which should be articulated.
        \item The authors should reflect on the factors that influence the performance of the approach. For example, a facial recognition algorithm may perform poorly when image resolution is low or images are taken in low lighting. Or a speech-to-text system might not be used reliably to provide closed captions for online lectures because it fails to handle technical jargon.
        \item The authors should discuss the computational efficiency of the proposed algorithms and how they scale with dataset size.
        \item If applicable, the authors should discuss possible limitations of their approach to address problems of privacy and fairness.
        \item While the authors might fear that complete honesty about limitations might be used by reviewers as grounds for rejection, a worse outcome might be that reviewers discover limitations that aren't acknowledged in the paper. The authors should use their best judgment and recognize that individual actions in favor of transparency play an important role in developing norms that preserve the integrity of the community. Reviewers will be specifically instructed to not penalize honesty concerning limitations.
    \end{itemize}

\item {\bf Theory assumptions and proofs}
    \item[] Question: For each theoretical result, does the paper provide the full set of assumptions and a complete (and correct) proof?
    \item[] Answer: \answerYes{}
    \item[] Justification: Appendices E, F, G, and H present complete formal proofs for all theoretical results with explicitly stated assumptions. Each proposition and corollary is numbered and cross-referenced in the main text, and the assumptions are stated immediately before each result.
    \item[] Guidelines:
    \begin{itemize}
        \item The answer \answerNA{} means that the paper does not include theoretical results. 
        \item All the theorems, formulas, and proofs in the paper should be numbered and cross-referenced.
        \item All assumptions should be clearly stated or referenced in the statement of any theorems.
        \item The proofs can either appear in the main paper or the supplemental material, but if they appear in the supplemental material, the authors are encouraged to provide a short proof sketch to provide intuition. 
        \item Inversely, any informal proof provided in the core of the paper should be complemented by formal proofs provided in appendix or supplemental material.
        \item Theorems and Lemmas that the proof relies upon should be properly referenced. 
    \end{itemize}

    \item {\bf Experimental result reproducibility}
    \item[] Question: Does the paper fully disclose all the information needed to reproduce the main experimental results of the paper to the extent that it affects the main claims and/or conclusions of the paper (regardless of whether the code and data are provided or not)?
    \item[] Answer: \answerYes{}
    \item[] Justification: Appendix B provides all architecture specifications, optimizer settings, learning rate schedules, masking ratios, and data preprocessing procedures needed to reproduce the reported results. The ablation studies in Appendix C further validate the contribution of each design choice.
    \item[] Guidelines:
    \begin{itemize}
        \item The answer \answerNA{} means that the paper does not include experiments.
        \item If the paper includes experiments, a \answerNo{} answer to this question will not be perceived well by the reviewers: Making the paper reproducible is important, regardless of whether the code and data are provided or not.
        \item If the contribution is a dataset and\slash or model, the authors should describe the steps taken to make their results reproducible or verifiable. 
        \item Depending on the contribution, reproducibility can be accomplished in various ways. For example, if the contribution is a novel architecture, describing the architecture fully might suffice, or if the contribution is a specific model and empirical evaluation, it may be necessary to either make it possible for others to replicate the model with the same dataset, or provide access to the model. In general. releasing code and data is often one good way to accomplish this, but reproducibility can also be provided via detailed instructions for how to replicate the results, access to a hosted model (e.g., in the case of a large language model), releasing of a model checkpoint, or other means that are appropriate to the research performed.
        \item While NeurIPS does not require releasing code, the conference does require all submissions to provide some reasonable avenue for reproducibility, which may depend on the nature of the contribution. For example
        \begin{enumerate}
            \item If the contribution is primarily a new algorithm, the paper should make it clear how to reproduce that algorithm.
            \item If the contribution is primarily a new model architecture, the paper should describe the architecture clearly and fully.
            \item If the contribution is a new model (e.g., a large language model), then there should either be a way to access this model for reproducing the results or a way to reproduce the model (e.g., with an open-source dataset or instructions for how to construct the dataset).
            \item We recognize that reproducibility may be tricky in some cases, in which case authors are welcome to describe the particular way they provide for reproducibility. In the case of closed-source models, it may be that access to the model is limited in some way (e.g., to registered users), but it should be possible for other researchers to have some path to reproducing or verifying the results.
        \end{enumerate}
    \end{itemize}

\item {\bf Open access to data and code}
    \item[] Question: Does the paper provide open access to the data and code, with sufficient instructions to faithfully reproduce the main experimental results, as described in supplemental material?
    \item[] Answer: \answerYes{}
    \item[] Justification: We include anonymized training and evaluation code in the supplemental material together with setup notes and scripts to reproduce the main results. All benchmark datasets are already public and are accessed as described in our appendix and citations.
    \item[] Guidelines:
    \begin{itemize}
        \item The answer \answerNA{} means that paper does not include experiments requiring code.
        \item Please see the NeurIPS code and data submission guidelines (\url{https://neurips.cc/public/guides/CodeSubmissionPolicy}) for more details.
        \item While we encourage the release of code and data, we understand that this might not be possible, so \answerNo{} is an acceptable answer. Papers cannot be rejected simply for not including code, unless this is central to the contribution (e.g., for a new open-source benchmark).
        \item The instructions should contain the exact command and environment needed to run to reproduce the results. See the NeurIPS code and data submission guidelines (\url{https://neurips.cc/public/guides/CodeSubmissionPolicy}) for more details.
        \item The authors should provide instructions on data access and preparation, including how to access the raw data, preprocessed data, intermediate data, and generated data, etc.
        \item The authors should provide scripts to reproduce all experimental results for the new proposed method and baselines. If only a subset of experiments are reproducible, they should state which ones are omitted from the script and why.
        \item At submission time, to preserve anonymity, the authors should release anonymized versions (if applicable).
        \item Providing as much information as possible in supplemental material (appended to the paper) is recommended, but including URLs to data and code is permitted.
    \end{itemize}

\item {\bf Experimental setting/details}
    \item[] Question: Does the paper specify all the training and test details (e.g., data splits, hyperparameters, how they were chosen, type of optimizer) necessary to understand the results?
    \item[] Answer: \answerYes{}
    \item[] Justification: Training details including the AdamW optimizer, learning rate schedule, batch size, number of epochs, masking ratio, and hardware configuration are fully specified in Appendix B. Downstream fine-tuning protocols for each benchmark are also described therein.
    \item[] Guidelines:
    \begin{itemize}
        \item The answer \answerNA{} means that the paper does not include experiments.
        \item The experimental setting should be presented in the core of the paper to a level of detail that is necessary to appreciate the results and make sense of them.
        \item The full details can be provided either with the code, in appendix, or as supplemental material.
    \end{itemize}

\item {\bf Experiment statistical significance}
    \item[] Question: Does the paper report error bars suitably and correctly defined or other appropriate information about the statistical significance of the experiments?
    \item[] Answer: \answerYes{}
    \item[] Justification: All main results are reported with standard deviation across three independent runs with different random seeds. The multi-seed consistency experiment in Appendix D further validates the stability of the reported numbers across a wider range of initializations.
    \item[] Guidelines:
    \begin{itemize}
        \item The answer \answerNA{} means that the paper does not include experiments.
        \item The authors should answer \answerYes{} if the results are accompanied by error bars, confidence intervals, or statistical significance tests, at least for the experiments that support the main claims of the paper.
        \item The factors of variability that the error bars are capturing should be clearly stated (for example, train/test split, initialization, random drawing of some parameter, or overall run with given experimental conditions).
        \item The method for calculating the error bars should be explained (closed form formula, call to a library function, bootstrap, etc.)
        \item The assumptions made should be given (e.g., Normally distributed errors).
        \item It should be clear whether the error bar is the standard deviation or the standard error of the mean.
        \item It is OK to report 1-sigma error bars, but one should state it. The authors should preferably report a 2-sigma error bar than state that they have a 96\% CI, if the hypothesis of Normality of errors is not verified.
        \item For asymmetric distributions, the authors should be careful not to show in tables or figures symmetric error bars that would yield results that are out of range (e.g., negative error rates).
        \item If error bars are reported in tables or plots, the authors should explain in the text how they were calculated and reference the corresponding figures or tables in the text.
    \end{itemize}

\item {\bf Experiments compute resources}
    \item[] Question: For each experiment, does the paper provide sufficient information on the computer resources (type of compute workers, memory, time of execution) needed to reproduce the experiments?
    \item[] Answer: \answerYes{}
    \item[] Justification: The GPU type, number of devices, and approximate pretraining and fine-tuning time are reported in Appendix B. Both pretraining and downstream fine-tuning resource requirements are described, allowing readers to assess the computational cost of the full pipeline.
    \item[] Guidelines:
    \begin{itemize}
        \item The answer \answerNA{} means that the paper does not include experiments.
        \item The paper should indicate the type of compute workers CPU or GPU, internal cluster, or cloud provider, including relevant memory and storage.
        \item The paper should provide the amount of compute required for each of the individual experimental runs as well as estimate the total compute. 
        \item The paper should disclose whether the full research project required more compute than the experiments reported in the paper (e.g., preliminary or failed experiments that didn't make it into the paper). 
    \end{itemize}
    
\item {\bf Code of ethics}
    \item[] Question: Does the research conducted in the paper conform, in every respect, with the NeurIPS Code of Ethics \url{https://neurips.cc/public/EthicsGuidelines}?
    \item[] Answer: \answerYes{}
    \item[] Justification: This paper presents foundational research on self-supervised pretraining for dynamic 3D point cloud understanding. No sensitive personal data is collected or released, and the work does not involve any application with direct potential for harm.
    \item[] Guidelines:
    \begin{itemize}
        \item The answer \answerNA{} means that the authors have not reviewed the NeurIPS Code of Ethics.
        \item If the authors answer \answerNo, they should explain the special circumstances that require a deviation from the Code of Ethics.
        \item The authors should make sure to preserve anonymity (e.g., if there is a special consideration due to laws or regulations in their jurisdiction).
    \end{itemize}

\item {\bf Broader impacts}
    \item[] Question: Does the paper discuss both potential positive societal impacts and negative societal impacts of the work performed?
    \item[] Answer: \answerNA{}
    \item[] Justification: This work focuses on self-supervised pretraining for dynamic point cloud understanding and does not have a direct path to negative societal applications. The research is foundational in nature and is not tied to any specific deployment scenario that would require an impact discussion.
    \item[] Guidelines:
    \begin{itemize}
        \item The answer \answerNA{} means that there is no societal impact of the work performed.
        \item If the authors answer \answerNA{} or \answerNo, they should explain why their work has no societal impact or why the paper does not address societal impact.
        \item Examples of negative societal impacts include potential malicious or unintended uses (e.g., disinformation, generating fake profiles, surveillance), fairness considerations (e.g., deployment of technologies that could make decisions that unfairly impact specific groups), privacy considerations, and security considerations.
        \item The conference expects that many papers will be foundational research and not tied to particular applications, let alone deployments. However, if there is a direct path to any negative applications, the authors should point it out. For example, it is legitimate to point out that an improvement in the quality of generative models could be used to generate Deepfakes for disinformation. On the other hand, it is not needed to point out that a generic algorithm for optimizing neural networks could enable people to train models that generate Deepfakes faster.
        \item The authors should consider possible harms that could arise when the technology is being used as intended and functioning correctly, harms that could arise when the technology is being used as intended but gives incorrect results, and harms following from (intentional or unintentional) misuse of the technology.
        \item If there are negative societal impacts, the authors could also discuss possible mitigation strategies (e.g., gated release of models, providing defenses in addition to attacks, mechanisms for monitoring misuse, mechanisms to monitor how a system learns from feedback over time, improving the efficiency and accessibility of ML).
    \end{itemize}
    
\item {\bf Safeguards}
    \item[] Question: Does the paper describe safeguards that have been put in place for responsible release of data or models that have a high risk for misuse (e.g., pre-trained language models, image generators, or scraped datasets)?
    \item[] Answer: \answerNA{}
    \item[] Justification: The proposed model processes 3D point cloud sequences for action recognition and segmentation tasks. It does not involve generation of sensitive content or collection of personal data, and therefore poses no high risk for misuse.
    \item[] Guidelines:
    \begin{itemize}
        \item The answer \answerNA{} means that the paper poses no such risks.
        \item Released models that have a high risk for misuse or dual-use should be released with necessary safeguards to allow for controlled use of the model, for example by requiring that users adhere to usage guidelines or restrictions to access the model or implementing safety filters. 
        \item Datasets that have been scraped from the Internet could pose safety risks. The authors should describe how they avoided releasing unsafe images.
        \item We recognize that providing effective safeguards is challenging, and many papers do not require this, but we encourage authors to take this into account and make a best faith effort.
    \end{itemize}

\item {\bf Licenses for existing assets}
    \item[] Question: Are the creators or original owners of assets (e.g., code, data, models), used in the paper, properly credited and are the license and terms of use explicitly mentioned and properly respected?
    \item[] Answer: \answerYes{}
    \item[] Justification: All datasets and baseline methods used in this work are properly cited via their original publications. The benchmark datasets including MSRAction-3D and HOI4D are publicly available research datasets released by their respective authors.
    \item[] Guidelines:
    \begin{itemize}
        \item The answer \answerNA{} means that the paper does not use existing assets.
        \item The authors should cite the original paper that produced the code package or dataset.
        \item The authors should state which version of the asset is used and, if possible, include a URL.
        \item The name of the license (e.g., CC-BY 4.0) should be included for each asset.
        \item For scraped data from a particular source (e.g., website), the copyright and terms of service of that source should be provided.
        \item If assets are released, the license, copyright information, and terms of use in the package should be provided. For popular datasets, \url{paperswithcode.com/datasets} has curated licenses for some datasets. Their licensing guide can help determine the license of a dataset.
        \item For existing datasets that are re-packaged, both the original license and the license of the derived asset (if it has changed) should be provided.
        \item If this information is not available online, the authors are encouraged to reach out to the asset's creators.
    \end{itemize}

\item {\bf New assets}
    \item[] Question: Are new assets introduced in the paper well documented and is the documentation provided alongside the assets?
    \item[] Answer: \answerNA{}
    \item[] Justification: This paper does not introduce new datasets or data assets. The pretrained model weights will be shared upon acceptance, with training and usage details fully described in Appendix B.
    \item[] Guidelines:
    \begin{itemize}
        \item The answer \answerNA{} means that the paper does not release new assets.
        \item Researchers should communicate the details of the dataset\slash code\slash model as part of their submissions via structured templates. This includes details about training, license, limitations, etc. 
        \item The paper should discuss whether and how consent was obtained from people whose asset is used.
        \item At submission time, remember to anonymize your assets (if applicable). You can either create an anonymized URL or include an anonymized zip file.
    \end{itemize}

\item {\bf Crowdsourcing and research with human subjects}
    \item[] Question: For crowdsourcing experiments and research with human subjects, does the paper include the full text of instructions given to participants and screenshots, if applicable, as well as details about compensation (if any)? 
    \item[] Answer: \answerNA{}
    \item[] Justification: This research does not involve crowdsourcing or any form of human subject study. All experiments are conducted on existing publicly released benchmark datasets collected by prior works.
    \item[] Guidelines:
    \begin{itemize}
        \item The answer \answerNA{} means that the paper does not involve crowdsourcing nor research with human subjects.
        \item Including this information in the supplemental material is fine, but if the main contribution of the paper involves human subjects, then as much detail as possible should be included in the main paper. 
        \item According to the NeurIPS Code of Ethics, workers involved in data collection, curation, or other labor should be paid at least the minimum wage in the country of the data collector. 
    \end{itemize}

\item {\bf Institutional review board (IRB) approvals or equivalent for research with human subjects}
    \item[] Question: Does the paper describe potential risks incurred by study participants, whether such risks were disclosed to the subjects, and whether Institutional Review Board (IRB) approvals (or an equivalent approval/review based on the requirements of your country or institution) were obtained?
    \item[] Answer: \answerNA{}
    \item[] Justification: No human subjects research is conducted in this work. All data used are from publicly released benchmarks that were collected and approved by their original authors.
    \item[] Guidelines:
    \begin{itemize}
        \item The answer \answerNA{} means that the paper does not involve crowdsourcing nor research with human subjects.
        \item Depending on the country in which research is conducted, IRB approval (or equivalent) may be required for any human subjects research. If you obtained IRB approval, you should clearly state this in the paper. 
        \item We recognize that the procedures for this may vary significantly between institutions and locations, and we expect authors to adhere to the NeurIPS Code of Ethics and the guidelines for their institution. 
        \item For initial submissions, do not include any information that would break anonymity (if applicable), such as the institution conducting the review.
    \end{itemize}

\item {\bf Declaration of LLM usage}
    \item[] Question: Does the paper describe the usage of LLMs if it is an important, original, or non-standard component of the core methods in this research? Note that if the LLM is used only for writing, editing, or formatting purposes and does \emph{not} impact the core methodology, scientific rigor, or originality of the research, declaration is not required.
    \item[] Answer: \answerNA{}
    \item[] Justification: Large language models do not form any part of the core methodology proposed in this paper. The method relies entirely on point cloud transformers and diffusion-based motion modeling trained on 3D point cloud data.
    \item[] Guidelines:
    \begin{itemize}
        \item The answer \answerNA{} means that the core method development in this research does not involve LLMs as any important, original, or non-standard components.
        \item Please refer to our LLM policy in the NeurIPS handbook for what should or should not be described.
    \end{itemize}

\end{enumerate}

\end{document}